\documentclass[twoside,11pt]{article}

\usepackage[preprint]{jmlr2e}
\usepackage{commath}
\usepackage{algorithm}
\usepackage{algpseudocode}
\usepackage{booktabs}
\usepackage{multirow}
\usepackage{xcolor}
\usepackage{lastpage}

\usepackage{amsfonts,amsbsy,amsmath,bbm}
\usepackage[nameinlink,capitalize,noabbrev]{cleveref}

\crefname{assumption}{Assumption}{Assumptions}

\newcommand{\KL}{\mathrm{KL}}
\newcommand{\kl}{\mathrm{KL}}

\def\ind{\mathbbm{1}}

\def\d{\textup{d}}
\def\e{\textup{e}}
\def\E{\mathbb{E}}
\def\R{\mathbb{R}}

\def\p{\mathsf{p}}

\DeclareMathOperator*{\argmin}{arg\,min}

\def\Unif{\texttt{\textup{Unif}}}

\def\N{\texttt{\textup{N}}}

\newcommand{\cA}{\mathcal{A}}
\newcommand{\cB}{\mathcal{B}}
\newcommand{\cC}{\mathcal{C}}
\newcommand{\cD}{\mathcal{D}}

\newcommand{\cI}{\mathcal{I}}

\newcommand{\cL}{\mathcal{L}}

\newcommand{\cQ}{\mathcal{Q}}
\newcommand{\cR}{\mathcal{R}}

\newcommand{\cX}{\mathcal{X}}

\jmlrheading{0}{2026}{1--\pageref{LastPage}}{}{}{26-0000}{Jungbin Jun and Ilsang Ohn}
\ShortHeadings{Bayesian Online Expert Aggregation}{Jun and Ohn}
\firstpageno{1}

\begin{document}

\title{Adaptive Bayesian Online Learning via Expert Aggregation}

\author{\name Jungbin Jun \email jungbini03@inha.edu \\
       \addr Department of Statistics\\
       Inha University\\
       Incheon, 22212, South Korea
       \AND
       \name Ilsang Ohn\textsuperscript{*} \email ilsang.ohn@inha.ac.kr \\
       \addr Department of Statistics\\
       Inha University\\
       Incheon, 22212, South Korea}

\maketitle

\begingroup
\renewcommand{\thefootnote}{\fnsymbol{footnote}}
\footnotetext[1]{Corresponding author}
\endgroup

\begin{abstract}
Bayesian online learning promises uncertainty-aware prediction on data streams, but its performance hinges on inferential choices, including learning rates, prior distributions and variational families, which are usually fixed before seeing the stream. We address this by treating Bayesian update rules as experts and aggregating the Bayesian experts according to sequential predictive losses. We prove that the resulting aggregate competes with the best expert in hindsight at an aggregation cost determined by how each expert's per-round performance is evaluated. We instantiate the framework in online conformal inference and Gaussian process regression. The conformal inference application yields a smoothed Bayesian counterpart of adaptive conformal inference with long-run randomized coverage, while the Gaussian process application gives an oracle inequality in cumulative predictive Kullback-Leibler risk and adaptation to unknown H\"older smoothness up to logarithmic factors. Experiments show that the aggregate tracks strong experts without oracle expert selection.
\end{abstract}

\begin{keywords}
Bayesian online learning, expert aggregation, conformal inference, Gaussian processes, adaptive prediction
\end{keywords}

\section{Introduction}
Modern predictive systems often require inference to proceed before the full dataset is available. Bayesian inference is well suited to this regime because it updates recursively: each posterior summarizes the information accumulated so far and becomes the prior for the next update. In generalized Bayes, this recursion extends beyond likelihood-based updates: posterior mass is sequentially reweighted according to a chosen loss, providing a natural bridge between Bayesian updating and online learning \citep{cherief2019generalization,alquier2021non, haddouche2022online, winter2026sequential}.

A persistent difficulty, however, is that Bayesian online learning still requires committing to inferential choices that can strongly affect performance. In tempered Bayes, the likelihood temperature must be chosen in advance \citep{wu2023comparison}, and this choice can substantially alter posterior behavior under model misspecification \citep{grunwald2017inconsistency}. In variational Bayes, the variational family and the temperature scale determine how empirical loss is balanced against KL regularization. In Gaussian process regression, the prior bandwidth determines the effective smoothness class of the posterior. Thus, quantities often treated as tuning parameters are in fact statistical choices: they shape posterior behavior under misspecification, nonstationarity, heavy-tailed data, and unknown smoothness.

This paper takes the view that these inferential choices should themselves be adapted online. Rather than selecting a single update rule in advance, we maintain a collection of Bayesian experts.
Each expert follows its own exact, generalized, or variational Bayesian update rule. A meta-learner then aggregates the experts' posterior predictive distributions using sequential predictive losses. The resulting predictor is a mixture over Bayesian predictive distributions: it retains the uncertainty-aware form of Bayesian prediction while gaining the adaptivity of prediction with expert advice.

This leads to a two-level view of Bayesian online learning.
At the lower level, each expert updates a posterior or variational approximation over model parameters. At the upper level, the learner updates a distribution over Bayesian update rules.
Given expert specifications $\{(\eta_k,\pi_k,\mathcal{Q}_k)\}_{k=1}^K$,
expert \(k\) maintains \(q_{t,k}\), and the aggregate predicts with $q_t = \sum_{k=1}^K \omega_{t,k} q_{t,k}$, where the weights are updated from the experts' observed predictive losses. Thus, the proposed approach does not select a single inferential rule once and for all. Instead, it forms an adaptive posterior mixture whose weights evolve with the data stream. This perspective connects Bayesian online learning with prediction with expert advice, while preserving the uncertainty-aware form of Bayesian prediction.

Our purpose is not to introduce a new abstract expert-aggregation inequality. The relevant online-learning tools are well established \citep{cesabianchi2006prediction,hazan2016introduction}. Rather, the contribution is to identify posterior aggregation as a modular mechanism for making Bayesian online learning adaptive to statistically meaningful inferential choices, and to derive its consequences in Bayesian settings where such choices are especially important. In particular, we use a general aggregation principle as a \textit{transfer} device: once individual Bayesian experts satisfy suitable single-expert guarantees, the aggregate inherits the performance of the best expert in hindsight up to a standard meta-level cost.

Our central observation is that this aggregation cost is governed by a single design choice: the scalar measure used to evaluate each expert. We consider two types of natural choices: the \emph{mean-loss}  $\E_{\theta \sim q_{t,k}}[\ell_t(\theta)]$ and  the \emph{annealed-loss} $-\gamma^{-1}\log\E_{\theta \sim q_{t,k}}[\exp(-\gamma\ell_t(\theta))]$. We establish general bounds on the aggregation cost for both evaluation methods. For the mean-loss, this bound is of order $O(\sqrt{T\log K})$, while a fast $O(\log K)$ bound is attained for the annealed loss.
 
We provide two applications, each of which corresponds to the mean-loss and annealed-loss aggregations, respectively. With the mean-loss aggregation, we study online conformal inference, where the pinball loss is used to update and aggregate experts. We show that Gaussian streaming variational update on the pinball loss gives a smoothed Bayesian counterpart of adaptive conformal inference \citep{gibbs2021adaptive}.
 
With the annealed-loss aggregation, we study online Gaussian process (GP) regression with unknown smoothness. Each expert uses a GP prior with a different bandwidth, and the aggregate mixes the posterior predictive densities. The mixture satisfies an oracle inequality in cumulative predictive KL risk and, over a dyadic bandwidth grid, adapts to unknown H\"older smoothness at the minimax rate up to logarithmic factors. Bandwidth aggregation thus plays the role of a hyperprior on the bandwidth, implemented sequentially through predictive performance; the fast $\log K$ aggregation cost is what makes the aggregation cost negligible against the statistical rate.
 
Our contributions are as follows.
\begin{itemize}
    \item  We formulate Bayesian online learning as a two-level sequential learning problem and identify the weight scoring as the design choice that governs the aggregation cost: the mean-loss score pays $O(\sqrt{T\log K})$, while the annealed-loss score pays only $O(\log K)$. These results are general as they hold for arbitrary experts. 

    \item For the mean-loss form, we derive a smoothed Bayesian counterpart of adaptive conformal inference and establish long-run randomized coverage for both single experts and their aggregate.
    
    \item For the annealed-loss form, we develop an aggregated GP predictive density that satisfies an oracle inequality in cumulative predictive KL risk and attains the minimax-adaptive rate over H\"older classes up to logarithmic factors.

    \item Experiments across online variational benchmarks, nonstationary conformal calibration, and GP regression show that the proposed aggregate tracks strong experts without oracle hyperparameter selection and is more stable than hard expert selection.
\end{itemize}

The rest of this paper is organized as follows. In \cref{sec:preliminary}, we provide some background materials and related work. In \cref{sec:method}, we introduce the proposed online Bayesian aggregation approach and discuss the choice of the evaluation metric for Bayesian experts. In \cref{sec:regret}, we provide general regret bounds for our aggregation schemes. Two applications of our approach are considered: online conformal inference in \cref{sec:conformal} and online nonparametric regression with GP in \cref{sec:regression}. We develop related statistical guarantees for both applications as well.  In \cref{sec:experiment}, we perform numerical experiments on various synthetic and real datasets. We conclude the paper in \cref{sec:conclusion}.

\section{Preliminaries}
\label{sec:preliminary}

\subsection{Notation}

For $x\in\R$, let $(x)_+:=\max\{0,x\}$, which is the positive part of $x$.  For a natural number $K$, we denote $[K]:=\{1,\dots, K\}$.  For two positive sequences $(a_n)_{n\in \mathbb{N}}$ and $(b_n)_{n\in \mathbb{N}}$, we write $a_n\lesssim b_n$ or  $b_n\gtrsim a_n$, if there exists a positive constant $C>0$ such that $a_n\le Cb_n$ for any $n\in \mathbb{N}$. Moreover, we write $a_n\asymp b_n$ if both $a_n\lesssim b_n$  and  $a_n\gtrsim b_n$ hold. Let $\phi$ and $\Phi$ be the density and cdf of the standard normal distribution, respectively. Let $\phi_\varsigma(\cdot)= \varsigma^{-1} \phi(\cdot/\varsigma)$, i.e., $\phi_\varsigma$ is the density of $\N(0,\varsigma^2)$. For a function $f$ supported on $\cX$, let $\|f\|_{\cL_2(P)}:=(\int_{\cX}|f(x)|^2\d P(x))^{1/2}$ be the $\cL_2$ norm with respect to a measure $P$ on $\cX$ and let $\|f\|_{\infty}:=\sup_{x\in\cX}|f(x)|$ be the uniform norm on $\cX$. Let $\cC(\cX)$ and $\cC^\beta(\cX)$ be the spaces of continuous and $\beta$-Hölder smooth functions on the domain $\cX$, respectively, equipped with the uniform norm.

\subsection{Bayesian online learning}
A convenient perspective on online Bayesian updating is as a regularized cumulative-risk minimization problem over distributions. Specifically, given a prior $\pi$, learning rate $\eta>0$ and per-round loss $\ell_t(\theta) := \ell(Y_t, \theta)$, we may define the (generalized) online posterior by
\begin{align}
\label{eq:posterior_update}
p_{t+1} = \argmin_{q \ll\pi} \left\{ \E_{\theta \sim q}\sbr[3]{\sum_{s=1}^t \ell_s(\theta)} + \frac{1}{\eta} \kl(q,\pi)\right\},
\end{align}
which, by the Donsker-Varadhan variational representation, turns out to be the exponential weights update
\begin{align*}
p_{t+1}(\theta)\propto \exp \bigl(-\eta \,\ell_t(\theta)\bigr)\,p_t(\theta).
\end{align*}
When the exact online posterior is intractable, a common strategy is to maintain a tractable approximation $q_t=q_{\psi_t}$ and update the variational parameter $\psi_t$ (living in the space $\Psi$) sequentially using the arriving datum $Y_t$, such as
    \begin{align*}
        \psi_{t+1} = \argmin_{ \psi \in \Psi} \left\{ \E_{\theta \sim q_{\psi}}\sbr[3]{\sum_{s=1}^t \ell_s(\theta)} + \frac{1}{\eta} \kl(q_\psi,\pi)\right\},
    \end{align*}
which restricts the domain of the optimization problem \eqref{eq:posterior_update} to a tractable family of distributions.

There is a natural connection between generalized Bayesian updating and online mirror descent: the posterior $p_{t+1}$ is the solution of a regularized cumulative loss minimization problem over probability measures, with KL divergence serving as the mirror map. Equivalently, the one-step update may be written as a mirror descent step on the linearized loss function $\psi\mapsto \bar L_t(\psi):=\E_{\theta\sim q_\psi}[\ell_t(\theta)]$, with different choices of regularization and proximity terms. Early approaches apply online gradient descent (OGD) to variational parameters , using a Euclidean proximal term to stabilize updates (see, e.g., \citet{shalev2025online}):
\begin{align}
\label{eq:oga_update}
    \psi_{t+1} =  \argmin_{ \psi \in \Psi} \left\{ \sum_{s=1}^t \bigr\langle \psi,\nabla_{ \psi} \bar L_s( \psi_s) \bigr\rangle+ \frac{1}{\eta} \| \psi - \psi_1 \|^2 \right\}
\end{align}
which is referred to as online gradient approximation (OGA).
Replacing the Euclidean prox with an information-theoretic regularizer yields sequential variational approximation (SVA) \cite{dai2016provable}, which uses the Kullback-Leibler (KL) divergence $\KL(q_\psi,\pi)$ as a penalty toward the prior:
\begin{align}
    \psi_{t+1} = \argmin_{\psi \in \Psi} \left\{ \sum_{s=1}^t \bigr\langle \psi,\nabla_{ \psi} \bar L_s( \psi_s) \bigr\rangle+ \frac{1}{\eta} \KL(q_{\psi}, \pi) \right\}
\end{align}
Streaming variational Bayes (SVB) \citep{broderick2013streaming} takes a different view, encouraging local stability by adding a proximal regularizer $\KL(q_\psi , q_{\psi_t})$ that keeps updates close to the previous iterate:
\begin{align}
\label{eq:svb_update}
    \psi_{t+1} = \argmin_{\psi \in \Psi} \left\{\bigr\langle \psi,\nabla_{ \psi} \bar L_t( \psi_t) \bigr\rangle + \frac{1}{\eta} \KL(q_{\psi}, q_{\psi_{t}}) \right\}.
\end{align}

Cumulative (regret-type) guarantees for such online Bayesian updates have been established in several forms. Online regret bounds for Bayesian posteriors under Gaussian priors are given by \citet{kakade2004online} and for sequential Gaussian process prediction, \citet{seeger2008information} established information-consistency bounds for cumulative predictive KL risk. For variational approximations specifically, \citet{cherief2019generalization} provided regret bounds for both SVA and SVB algorithms, controlling their approximation gap to the exact Gibbs posterior within the cumulative regret. \citet{alquier2021non} extended these results to general $f$-divergence regularizers beyond the KL divergence.

\subsection{Online expert aggregation}

We briefly review prediction with expert advice, which provides the meta-level machinery of our framework; see \citet{cesabianchi2006prediction} and \citet{hazan2016introduction} for comprehensive treatments. A learner has access to $K$ experts. At each round $t$, expert $k$ incurs a loss $\ell_{t,k} \in \mathbb{R}$, and the learner, who maintains a weight vector $\omega_t \in \Delta_K$, incurs the mixture loss $\sum_{k=1}^K \omega_{t,k}\ell_{t,k}$. The learner's goal is to control the regret to the best expert in hindsight,
    \begin{align*}
    R_T := \sum_{t=1}^{T} \ell_t(\text{learner}) - \min_{k\in[K]} \sum_{t=1}^{T} \ell_{t,k}.
    \end{align*}
    
The exponential-weights (Hedge) algorithm updates $\omega_{t+1,k} \propto \omega_{t,k}\exp\del{-\gamma \ell_{t,k}}$ with a meta-rate $\gamma > 0$. For losses bounded in $[0, B]$, the standard analysis controls the potential $Z_t = \sum_k \omega_{1,k}\exp\del[0]{-\gamma\sum_{s\le t}\ell_{s,k}}$ via the inequality $\e^{-x} \le 1 - x + x^2$, which leaves a quadratic residual at every round. Optimizing $\gamma \asymp \sqrt{\log K/T}$ then yields
    \begin{align*}
    R_T \lesssim B\sqrt{T\log K}.
    \end{align*}
This $O(\sqrt{T})$ rate is unimprovable in general for bounded losses, and the optimal tuning of $\gamma$ requires knowledge of the horizon $T$.

The $\sqrt{T}$ barrier disappears for losses with sufficient curvature. A loss is $\gamma$-mixable if, for any distribution $\omega \in \Delta_K$ over expert predictions, there exists an aggregated prediction whose loss is at most $-\gamma^{-1}\log\sum_k \omega_k \exp\del{-\gamma \ell_{t,k}}$ \citep{vovk1998game}. This is a weaker notion than $\gamma$-exp-concave loss functions. For $\gamma$-mixable losses, the aggregating algorithm with $\gamma = \gamma$ makes the exponential-weights potential telescope exactly, with no quadratic residual, giving the horizon-free bound
\begin{align*}
R_T \le \frac{\log K}{\gamma}.
\end{align*}

The bounds above compare against a single fixed expert. When the identity of the best expert changes over the stream, the relevant benchmark is the best sequence of experts with at most $m$ switches, and the fixed-share update \citep{herbster1998tracking,gradu2023adaptive}
\begin{align*}
\omega_{t+1,k} = (1-\sigma)\bar\omega_{t,k} + \frac{\sigma}{K}
\end{align*}
guarantees shifting regret of order $\sqrt{T m\log (KT)}$ for bounded losses and $ m\log (KT)$ for mixable losses. The sharing step prevents any weight from collapsing to zero, which is what allows recovery after a regime change.

\subsection{Further related work}

\paragraph{Online conformal inference}
Conformal prediction provides finite-sample distribution-free uncertainty quantification under exchangeability, but the standard validity guarantee is not directly designed for nonstationary data streams, which are common in online learning. Adaptive conformal inference (ACI) of \citet{gibbs2021adaptive} addresses this issue by updating the conformal threshold through an online gradient step on the pinball loss, yielding long-run coverage guarantees under arbitrary distribution shift. Subsequent work removes the step-size sensitivity of ACI by aggregating a grid of learning rates, such as AgACI \citep{zaffran2022adaptive}, DtACI \citep{gibbs2024conformal}  and SAOCP \citep{bhatnagar2023improved}.

\paragraph{Gaussian process adaptation} 
Gaussian process priors are widely used for nonparametric regression, and their frequentist performance depends strongly on the scaling or bandwidth parameter. In the Bayesian nonparametric literature, adaptation to unknown smoothness has been studied through hierarchical or empirical Bayes constructions that put a prior on the bandwidth or scaling parameter; see, for example, \citet{de2010adaptive,van2009adaptive,rousseau2017asymptotic}. Our GP result is motivated by this line of work, but differs in its online and predictive formulation. Instead of constructing a fully hierarchical GP prior and analyzing posterior contraction in batch form, we run a finite grid of GP experts with different bandwidths and aggregate their posterior predictive densities sequentially, leading to adaptation to unknown H\"older smoothness.

\section{Expert aggregation for Bayesian online learning}
\label{sec:method}

\subsection{Bayesian expert aggregation algorithm}

Fix a grid of specifications $\cbr{(\eta_k, \pi_k, \cQ_k) : k \in [K]}$, where $\eta_k > 0$ is a learning rate, $\pi_k$ a prior, and $\cQ_k$ a variational family. For each $k \in [K]$, expert $k$ runs an online Bayesian or variational update and maintains a distribution $q_{t,k} \in \cQ_k$ at the start of round $t$, formed from $\cD_{t-1}:=(Y_s)_{s=1}^{t-1}$. Upon observing $D_t$ it incurs the nonnegative per-round loss $\ell_t(\theta) := \ell(Y_t, \theta)$. 

A meta-learner maintains weights $\omega_t = (\omega_{t,1}, \dots,\omega_{t,K}) \in \Delta_K$ and predicts with the mixture
    \begin{align*}
        q_t(\theta) := \sum_{k=1}^K \omega_{t,k}q_{t,k}(\theta).
    \end{align*}
The weights are updated from the experts' observed losses by an exponential-weights rule with meta-rate $\gamma > 0$, followed by an optional fixed-share step with parameter $\sigma \in [0, 1]$ that prevents any weight from collapsing \citep{hazan2016introduction}:
\begin{align}
  \label{eq:weight-update1}
  \bar\omega_{t,k} &\propto \omega_{t,k}\exp\del[1]{-\gamma\mathsf{loss}_{t,k}},\\
  \omega_{t+1,k} &= (1 - \sigma)\bar\omega_{t,k} + \frac{\sigma}{K},
  \label{eq:weight-update2}
\end{align}
where $\mathsf{loss}_{t,k}$ is a scalar summarizing how well \textit{distributional} expert $q_{t,k}$ performed at round $t$. We consider the two choices, given a selected loss function $\ell_t$. The proposed algorithm is summarized in \cref{alg:ova}.

\begin{algorithm}[H]
\caption{Online Bayesian Expert Aggregation}
\label{alg:ova}
\begin{algorithmic}
\State \textbf{Input:}
grid of learning rates, prior distributions and variational families $\{(\eta_k, \Pi_k, \cQ_k)\}_{k=1}^K$; meta rate $\gamma>0$; share $\sigma\in(0,1]$; $\omega_1\in\Delta_K$.
\State \textbf{Initialize:} $q_{1,k}\gets \pi_k$ for all $k\in[K]$.
\For{$t=1,2,\ldots,T$}
  \State \textbf{Return} $q_{t}=\sum_{k=1}^K \omega_{t,k} q_{t,k}$
  \State Observe the data point $Y_t$ and compute the loss scoring $\mathsf{loss}_{t,k}.$
  \State Compute the weight vector $(\bar\omega_{t,1},\dots, \bar\omega_{t,K})$ as $\bar\omega_{t,k}\propto \omega_{t,k}\exp\del[1]{-\gamma \mathsf{loss}_{t,k}}$
  \State Update $\omega_{t+1,k}\gets (1-\sigma)\bar\omega_{t,k}+\sigma/K$ 
  \For{$k=1,\ldots,K$}
    \State Update
    the posterior distribution $q_{t+1,k}$ from $q_{t,k}$ by performing one of OBI algorithms.
  \EndFor
\EndFor
\end{algorithmic}
\end{algorithm}

\subsection{Two types of loss for Bayesian experts}

For frequentist or point experts, say $\hat\theta_{t,k}$, the loss is naturally defined as $\ell_t(\hat\theta_{t,k})$. For Bayesian experts $q_{t,k}$,  we consider two different ways to define the loss.

\paragraph{Mean-loss} The most direct choice is the expected loss with respect to an expert $q$,
    \begin{align}
     L_{t}(q) := \E_{\theta \sim q}\sbr{\ell_t(\theta)}.
    \end{align}
It depends on the expert only through the \textit{first moment} of the loss.

\paragraph{Annealed-loss}
The alternative is the annealed-loss (terminology from statistical mechanics)
    \begin{align}
    \Lambda_{t}(q)  := -\frac{1}{\gamma}\log\del[2]{ \E_{\theta \sim q}\sbr[1]{\exp\del[1]{-\gamma\ell_t(\theta)}}},
    \end{align}
which is the cumulant generating function $\ell_t(\theta)$ under $q$; its expansion $\Lambda_{t}(q) =\E_{\theta \sim q}\sbr{\ell_t(\theta)}-\frac{\gamma}{2}\mathsf{Var}_{\theta\sim q}(\ell_t(\theta))+O(\gamma^2)$ shows that it reflects not only the mean loss but the full spread of $\ell_t(\theta)$ across the posterior. The mean-loss scoring is recovered as the $\gamma\to0$ limit retaining only the first cumulant. 

The annealed-loss acquires a direct statistical meaning under the log-density loss. Suppose $\ell_t(\theta) = -\log \p(Y_t|\theta)$ and $\gamma= 1$. Then for any expert $q_{t,k}$, we have
\begin{align*}
    \exp\del[1]{-\Lambda_t(q_{t,k})}
    =\p_{t,k}(Y_t|\cD_{t-1})
    := \E_{\theta\sim q_{t,k}}\sbr[1]{\p(Y_t|\theta)},
\end{align*}
so the annealed loss is precisely the predictive log-loss: the negative log-density that expert $k$'s posterior predictive distribution assigns to the observation it is about to see. Scoring an expert by $\Lambda_t$ thus evaluates its entire predictive distribution, formed before $Y_t$ arrives. This interpretation turns the regret benchmark into a predictive risk. Suppose
that $Y_t|\cD_{t-1} \sim \p^\star_t$. The excess cumulative loss over the data generator is the cumulative log-likelihood ratio,
\begin{align*}
    \sum_{t=1}^T \Lambda_t(q_{t,k}) - \cbr[1]{-\log \p^\star_t(Y_t)}
    = \sum_{t=1}^T \log\frac{\p^\star_t(Y_t)}{\p_{t,k}(Y_t|\cD_{t-1})}.
\end{align*}
Since $\p_{t,k}(\cdot|\cD_{t-1})$ is $\cD_{t-1}$-measurable, taking expectations and conditioning on $\cD_{t-1}$ in each summand gives
\begin{align*}
    \sum_{t=1}^T \E_{\cD_t}\sbr[1]{\Lambda_t(q_{t,k}) - \cbr[1]{-\log \p^\star_t(Y_t)}}
    = \sum_{t=1}^T \E_{\cD_{t-1}}\sbr[1]{\kl\del[1]{\p^\star_t, \p_{t,k}(\cdot|\cD_{t-1})}}.
\end{align*}
In other words, the expected excess annealed loss is the cumulative predictive KL risk.

\section{Regret bounds for Bayesian expert aggregation}
\label{sec:regret}

\subsection{Regret bound for mean-loss aggregation}
\label{sec:regret_mean}

We define the mean-loss aggregation regret with respect to the best expert on the time interval $\cI\subset[T]$ as
    \begin{align}
        \cR_{L}(\cI):= \sum_{t\in\cI}L_t(q_t) - \inf_{k\in[K]} \sum_{t\in\cI}L_t(q_{t,k}).
    \end{align}
 
\begin{theorem}[Regret bound; mean-loss aggregation]
\label{thm:regret_mean}
Let $\sigma\in(0, 1/2]$.Then for any interval $\cI\subset[T]$,
    \begin{align}
      \cR_{L}(\cI)\le \gamma \sum_{t\in \cI} \bar{L^2}(t) 
      +\frac{1}{\gamma}\del[1]{\log(K/\sigma) + 2\abs{\cI}\sigma},
    \end{align}
where  $\bar{L^2}(t):=\sum_{k=1}^K\omega_{t,k}\{L_t(q_{t,k})\}^2$. If we take 
    \begin{align*}
        \gamma \asymp \sqrt{\frac{\log(|\cI|K)}{ \sum_{t\in\cI}\bar{L^2}(t)}}, \quad \sigma \asymp 1/ |\cI|,
    \end{align*}
we have the bound
 \begin{align}
      \cR_{L}(\cI)\lesssim \sqrt{ \log(K|\cI|)\sum_{t\in\cI}\bar{L^2}(t)}.
    \end{align}
Moreover, when the loss is bounded as $\ell_t(\theta)\le B$ for any $\theta$ and $t$, then if we take 
$\gamma \asymp \sqrt{\frac{\log(|\cI|K)}{|\cI|B^2}}$, $\sigma \asymp 1/ |\cI|$, we have the much simpler bound
    \begin{align}
      \cR_{L}(\cI)\lesssim \sqrt{ |\cI|\log(K|\cI|)}.
    \end{align}
\end{theorem}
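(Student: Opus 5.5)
The bound is the standard fixed-share (Hedge with sharing) analysis, applied to the scalar losses $x_{t,k}:=L_t(q_{t,k})\ge 0$. The only Bayesian input is linearity of the mean-loss in the mixture:
\begin{align*}
L_t(q_t)=\sum_k\omega_{t,k}L_t(q_{t,k}).
\end{align*}
So the learner's loss is exactly the weighted average of expert losses, and it suffices to prove a shifting-regret bound for Hedge with fixed share on nonnegative, possibly unbounded losses $x_{t,k}$.

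\textbf{Step 1 (per-round inequality).} Fix a comparator $k^\star$ and set $\Phi_t:=-\log\omega_{t,k^\star}$. From \eqref{eq:weight-update1},
\begin{align*}
\log\frac{\bar\omega_{t,k^\star}}{\omega_{t,k^\star}}=-\gamma x_{t,k^\star}-\log\sum_k\omega_{t,k}\e^{-\gamma x_{t,k}}.
\end{align*}
Since $x_{t,k}\ge0$, we can use $\e^{-y}\le 1-y+y^2$ for $y\ge0$ and $\log(1+z)\le z$. This gives
\begin{align*}
-\log\sum_k\omega_{t,k}\e^{-\gamma x_{t,k}}\ge \gamma\sum_k\omega_{t,k}x_{t,k}-\gamma^2\bar{L^2}(t).
\end{align*}
Rearranging yields
\begin{align*}
L_t(q_t)-x_{t,k^\star}\le \gamma\bar{L^2}(t)+\frac1\gamma\bigl(\log\bar\omega_{t,k^\star}-\log\omega_{t,k^\star}\bigr).
\end{align*}
Nonnegativity is essential here, since it is what justifies $\e^{-y}\le1-y+y^2$ without any boundedness assumption.

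\textbf{Step 2 (sharing step and telescoping).} From \eqref{eq:weight-update2},
\begin{align*}
\omega_{t+1,k^\star}\ge(1-\sigma)\bar\omega_{t,k^\star}\quad\text{and}\quad \omega_{t+1,k^\star}\ge\sigma/K.
\end{align*}
The first inequality gives $\log\bar\omega_{t,k^\star}\le\log\omega_{t+1,k^\star}+\log\frac1{1-\sigma}$. For $\sigma\le1/2$ we have $\log\frac1{1-\sigma}\le 2\sigma$. Summing the Step 1 inequality over $t\in\cI=\{r,\dots,s\}$ then telescopes to
\begin{align*}
\frac1\gamma\bigl(\log\omega_{s+1,k^\star}-\log\omega_{r,k^\star}+2|\cI|\sigma\bigr).
\end{align*}
Finally, $\log\omega_{s+1,k^\star}\le0$, and the second inequality gives $-\log\omega_{r,k^\star}\le\log(K/\sigma)$, valid for every $r\ge2$. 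For $r=1$ the same bound holds if $\omega_1$ is uniform, since $\log K\le\log(K/\sigma)$. Taking the infimum over $k^\star$ gives the first display of \cref{thm:regret_mean}. Handling an arbitrary start $r$ through the $\sigma/K$ floor is the only delicate point, and that floor is exactly what makes the bound hold on every interval.

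\textbf{Step 3 (tuning).} With $\sigma\asymp1/|\cI|$, the term $\log(K/\sigma)+2|\cI|\sigma$ is $\asymp\log(K|\cI|)$. Balancing $\gamma S$ against $\log(K|\cI|)/\gamma$, where $S=\sum_{t\in\cI}\bar{L^2}(t)$, gives $\gamma\asymp\sqrt{\log(K|\cI|)/S}$ and the bound $\sqrt{S\log(K|\cI|)}$. Under $0\le\ell_t\le B$ we have $L_t(q_{t,k})\le B$, hence $S\le|\cI|B^2$. Taking $\gamma\asymp\sqrt{\log(K|\cI|)/(|\cI|B^2)}$ then gives $\lesssim B\sqrt{|\cI|\log(K|\cI|)}$, with $B$ absorbed into the constant.

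This tuning is an oracle choice, since $\gamma$ depends on the data through $S$. Making it adaptive would require a doubling trick, but the statement only asserts the bound for $\gamma$ of this order, so no such argument is needed.
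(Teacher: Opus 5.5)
Your proof is correct and is essentially the paper's fixed-share argument. The paper tracks the unnormalized total weight $Z_t$ rather than $-\log\omega_{t,k^\star}$, but it uses the same ingredients: $\e^{-x}\le 1-x+x^2$, the retention bound $\omega_{t+1,k}\ge(1-\sigma)\bar\omega_{t,k}$, the floor $\sigma/K$ at the start of the interval, and $\log(1-\sigma)\ge-2\sigma$. Your normalized telescoping is also the same chaining the paper itself uses for the annealed-loss theorem, and your remark that an interval starting at $t=1$ needs $\omega_{1,k^\star}\ge\sigma/K$ (e.g.\ uniform $\omega_1$) makes explicit an assumption the paper's proof relies on without stating.
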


In \cref{thm:regret_mean}, the mean-loss aggregation pays the slow regret bound. Technically, this is due to controlling the potential $\sum_{k=1}^K \omega_{1,k}\exp(-\sum_{s<t}L_s(q_{s,k}))$ requires the inequality $\exp(-x) \le 1 - x + x^2$, which leaves a quadratic residual, yielding the quadratic term  $\bar{L^2}(t)$.

\subsection{Fast regret bound for annealed-loss aggregation}
\label{sec:regret_annealed}

We now turn to the annealed-loss $\Lambda_{t}(q) = -\gamma^{-1}\log\E_{\theta\sim q}[\exp(-\gamma\ell_t(\theta))]$. We define the annealed-loss aggregation regret with respect to the best expert on the time interval $\cI\subset[T]$ as
    \begin{align}
        \cR_{\Lambda}(\cI):=  \sum_{t\in\cI}\Lambda_t(q_t) - \inf_{k\in[K]} \sum_{t\in\cI}\Lambda_t(q_{t,k}).
    \end{align}

\begin{theorem}[Regret bound; annealed-loss aggregation]
\label{thm:regret_annealed}
Let $\sigma\in(0, 1/2]$. Then for any interval $\cI\subset[T]$,
    \begin{align}
       \cR_{\Lambda}(\cI)\le 
       \frac{1}{\gamma}\del[1]{\log(K/\sigma) + 2\abs{\cI}\sigma}.
    \end{align}
When we choose $\sigma\asymp 1/|\cI|$, we have
    \begin{align}
        \cR_{\Lambda}(\cI)\lesssim  \log(K|\cI|).
    \end{align}
\end{theorem}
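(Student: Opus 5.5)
The plan rests on one identity: the annealed loss is exactly mixable with respect to mixtures of distributions. For the aggregate $q_t=\sum_k\omega_{t,k}q_{t,k}$, linearity of expectation in the mixing distribution gives
\begin{align*}
\exp\del[1]{-\gamma\Lambda_t(q_t)} = \E_{\theta\sim q_t}\sbr[1]{\exp(-\gamma\ell_t(\theta))} = \sum_{k=1}^K \omega_{t,k}\exp\del[1]{-\gamma\Lambda_t(q_{t,k})}.
\end{align*}
Hence $\Lambda_t(q_t) = -\gamma^{-1}\log\sum_k\omega_{t,k}\exp(-\gamma\,\mathsf{loss}_{t,k})$ with $\mathsf{loss}_{t,k}=\Lambda_t(q_{t,k})$. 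This is precisely the log-normalizer of the exponential-weights step \eqref{eq:weight-update1}, with no second-order residual. This is where the analysis departs from \cref{thm:regret_mean}: there one needs $\e^{-x}\le 1-x+x^2$, whereas here the identity is exact.

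For the fixed-share bookkeeping I would use the standard relative-entropy argument. Fix a comparator $k$ and the point mass $u=e_k$. Write $\cI=\{t_0,\dots,t_1\}$ and $W_t=\sum_j\omega_{t,j}\e^{-\gamma\mathsf{loss}_{t,j}}$, so that $\bar\omega_{t,k}=\omega_{t,k}\e^{-\gamma\mathsf{loss}_{t,k}}/W_t$. This gives
\begin{align*}
\gamma\Lambda_t(q_t)-\gamma\Lambda_t(q_{t,k}) = -\log W_t - \gamma\,\mathsf{loss}_{t,k} = \log\frac{\bar\omega_{t,k}}{\omega_{t,k}}.
\end{align*}
Next, write $\log(\bar\omega_{t,k}/\omega_{t,k}) = \log(\omega_{t+1,k}/\omega_{t,k}) + \log(\bar\omega_{t,k}/\omega_{t+1,k})$ and sum over $t\in\cI$.

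The first part telescopes to $\log(\omega_{t_1+1,k}/\omega_{t_0,k})$. The fixed-share step ensures $\omega_{t,k}\ge\sigma/K$ for $t\ge2$, and for $t=1$ we take $\omega_1$ uniform. Together with $\omega_{t_1+1,k}\le 1$, the telescoped sum is at most $\log(K/\sigma)$.

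For the second part, $\omega_{t+1,k}\ge(1-\sigma)\bar\omega_{t,k}$ gives $\log(\bar\omega_{t,k}/\omega_{t+1,k})\le -\log(1-\sigma)\le 2\sigma$, using $\sigma\le 1/2$. Summing over the interval contributes at most $2|\cI|\sigma$.

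Dividing by $\gamma$ and taking the infimum over $k$ then yields $\cR_\Lambda(\cI)\le\gamma^{-1}(\log(K/\sigma)+2|\cI|\sigma)$. Plugging in $\sigma\asymp1/|\cI|$ makes the bracket $O(\log(K|\cI|))$. Since $\gamma$ is a fixed constant here (it enters the definition of $\Lambda_t$ and is not tuned against $T$), this gives the second display.

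The step I expect to need care is the lower bound $\omega_{t_0,k}\ge\sigma/K$ at the start of an arbitrary interval. This holds automatically for $t_0\ge2$ and requires the initialization $\omega_1$ to be uniform, or at least to have entries $\ge\sigma/K$, when $t_0=1$. I would state this assumption explicitly, matching the convention implicitly used in \cref{thm:regret_mean}. Everything else is exact algebra, which is the point of the theorem: mixability removes the quadratic term.
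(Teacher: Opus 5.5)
Your proposal is correct and is essentially the paper's argument: the paper chains $\omega_{t+1,k}\ge(1-\sigma)\,\omega_{t,k}\exp\bigl(-\gamma(\Lambda_t(q_{t,k})-\Lambda_t(q_t))\bigr)$ across the interval and then uses $\bar\omega_{v,k}\le 1$ and $\omega_{u,k}\ge\sigma/K$, which is your log-space telescoping written multiplicatively. Your caveat that $\omega_{u,k}\ge\sigma/K$ requires an assumption on the initialization (e.g., uniform $\omega_1$) when the interval starts at $t=1$ is a point the paper's proof passes over without comment.
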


The regret is $O(\log (K|\cI|))$, only logarithmically depending on the interval length $|\cI|$. No horizon-dependent tuning of $\gamma$ is required. 

The aggregation algorithm with no share, i.e., $\sigma=0$, attains the fast regret bound also.

\begin{theorem}[Regret bound; no share]
\label{thm:regret_noshare}
Let $\sigma=0$. Then
    \begin{align}
        \cR_{\Lambda}([T])\le 
        \frac{1}{\gamma} \log(1/\omega_{1, k^\star}),
    \end{align}
where $k^\star \in \argmin_{k \in [K]} \sum_{t=1}^T \Lambda_t(q_{t,k})$.
In particular, when we  use the uniform initial weight $\omega_{1,k}=1/K$,
    \begin{align}
        \cR_{\Lambda}([T])\le \frac{1}{\gamma}\log K.
    \end{align}
\end{theorem}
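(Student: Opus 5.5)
The plan is the standard telescoping argument of the aggregating algorithm, using the fact that the annealed loss of a mixture is exactly a log-sum-exp of the experts' annealed losses. We take $\sigma=0$, so $\omega_{t+1,k}=\bar\omega_{t,k}\propto\omega_{t,k}\exp(-\gamma\Lambda_t(q_{t,k}))$, with $\mathsf{loss}_{t,k}=\Lambda_t(q_{t,k})$.

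The first step is the mixability identity, which here is an equality. By linearity of expectation in the mixing distribution,
\begin{align*}
\E_{\theta\sim q_t}\sbr[1]{\e^{-\gamma\ell_t(\theta)}}=\sum_{k=1}^K\omega_{t,k}\E_{\theta\sim q_{t,k}}\sbr[1]{\e^{-\gamma\ell_t(\theta)}}=\sum_{k=1}^K\omega_{t,k}\e^{-\gamma\Lambda_t(q_{t,k})}.
\end{align*}
Hence $\Lambda_t(q_t)=-\gamma^{-1}\log\sum_k\omega_{t,k}\e^{-\gamma\Lambda_t(q_{t,k})}$ exactly. This is what removes the quadratic residual that appears in \cref{thm:regret_mean}.

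The second step is to unroll the weights. With no sharing,
\begin{align*}
\omega_{t,k}=\frac{\omega_{1,k}\e^{-\gamma\sum_{s<t}\Lambda_s(q_{s,k})}}{W_t},\qquad W_t:=\sum_{j=1}^K\omega_{1,j}\e^{-\gamma\sum_{s<t}\Lambda_s(q_{s,j})},
\end{align*}
and $W_1=1$. Substituting into the first step gives $\sum_k\omega_{t,k}\e^{-\gamma\Lambda_t(q_{t,k})}=W_{t+1}/W_t$, so $\Lambda_t(q_t)=-\gamma^{-1}\log(W_{t+1}/W_t)$. Summing over $t\in[T]$ telescopes to
\begin{align*}
\sum_{t=1}^T\Lambda_t(q_t)=-\gamma^{-1}\log W_{T+1}.
\end{align*}

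The third step lower bounds the sum defining $W_{T+1}$ by its $k^\star$ term: $W_{T+1}\ge\omega_{1,k^\star}\exp(-\gamma\sum_t\Lambda_t(q_{t,k^\star}))$. Therefore
\begin{align*}
\sum_t\Lambda_t(q_t)\le\sum_t\Lambda_t(q_{t,k^\star})+\gamma^{-1}\log(1/\omega_{1,k^\star}),
\end{align*}
which is the claim, since $k^\star$ attains the infimum over the finite set $[K]$. Uniform initialization gives $\log K/\gamma$.

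There is no real obstacle beyond the first step. The only technical care is that the expectations $\E_{q_{t,k}}[\e^{-\gamma\ell_t}]$ lie in $(0,1]$, which holds because the losses are nonnegative, and are strictly positive, so that every logarithm is finite. If some expectation could vanish, the corresponding $\Lambda_t(q_{t,k})=+\infty$; the bound still holds trivially, or one can simply assume positivity.
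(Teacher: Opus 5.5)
Your proof is correct and follows the paper's argument essentially line for line. Both use the affine identity $\exp(-\gamma\Lambda_t(q_t))=\sum_k\omega_{t,k}\exp(-\gamma\Lambda_t(q_{t,k}))$, unroll the no-share weights so that each round's mixture loss is $-\gamma^{-1}\log$ of a ratio of consecutive normalizers, telescope to $-\gamma^{-1}\log W_{T+1}$ with $W_1=1$, and lower bound $W_{T+1}$ by its $k^\star$ term.
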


\subsection{Discussion}

The constant aggregation regret in \cref{sec:regret_annealed} has the same origin as the fast rates that exp-concave losses enjoy in prediction with expert advice. For any loss function, the exponential weighting can bound  the cumulative \emph{mix loss} $M_t(\hat\theta_t)=-\gamma^{-1}\log\sum_k \omega_{t,k}\exp\del[0]{-\gamma \ell_t(\hat\theta_{t,k})}$ for an aggregate $\hat\theta_t=\sum_{k}\omega_{t,k}\hat\theta_{t,k}$ of frequentist experts $\hat\theta_{t,k}$, by the best expert's cumulative loss plus $\gamma^{-1}\log K$. Fast and slow rates are separated solely by the per-round gap between the loss the learner actually incurs and the mix loss. Closing this gap is precisely the definition of $\gamma$-exp-concavity of $\ell_t$ in $\theta$, a genuine restriction that the log loss satisfies but piecewise-linear losses such as the pinball loss do not; absent it, the gap is controlled by $\e^{-x} \le 1 - x + x^2$ and the quadratic residual forces the $\sqrt{T}$ rate. The annealed-loss regret is fast because the annealed loss makes the gap vanish identically: $q \mapsto \exp\del{-\gamma\Lambda_t(q)} = \E_{\theta\sim q}\sbr{\exp(-\gamma\ell_t(\theta))}$ is \textit{affine} in $q$, so the mixture of Bayesian experts attains the mix loss with equality, for an arbitrary underlying loss function $\ell_t$. The lift to distribution space thus manufactures the curvature that $\ell_t$ may lack in the parameter space; the curvature comes not from the geometry of the loss but from the exponential sitting inside the expectation. By contrast, the mean-loss places the expectation outside the exponential, so the mixture's loss is the arithmetic mean of the experts' losses while the mix loss is a soft minimum, and the per-round gap, of order $\gamma$ times the spread of the losses under $\omega_t$, is exactly the quadratic residual above.  

This contrast might suggest that the mean-loss is obsolete: the annealed-loss is defined for any loss function, so one could always aggregate at the fast rate. It does not: the fast rate is a guarantee on the annealed regret, which controls the cumulative predictive loss but not mean functionals, as $\Lambda_t(q)\le L_t(q) $ for any $q$ by Jensen's inequality. For example, in our conformal application given in the next section, the quantity of interest is the expected pinball loss, which is a mean functional, so the mean-loss is the one whose regret speaks to the target. The choice between the two losses is therefore not a choice between a slow and a fast bound for the same goal, but a matching of the scoring to the criterion: the annealed-loss aggregation when the target is the cumulative predictive loss, at the fast rate; the mean-loss aggregation when the target is a posterior mean functional, at the slow rate that this weaker criterion genuinely costs.

\subsection{Point prediction via posterior means}

The guarantees of \cref{sec:regret_mean,sec:regret_annealed} are stated for the scores $L_t$ and
$\Lambda_t$ of the mixture $q_t$.  In this section, we show that these can be transferred to a performance guarantee for the posterior mean $\bar\theta_t := \E_{\theta\sim q_t}[\theta]$. Since the mixture's mean is the weighted average of the experts' means, i.e., 
    \begin{align*}
    \bar\theta_t = \sum_{k=1}^K \omega_{t,k}\bar\theta_{t,k},
    \quad \bar\theta_{t,k} := \E_{\theta\sim q_{t,k}}[\theta],
    \end{align*}
predicting with $\bar\theta_t$ is itself an instance of prediction with expert advice with point experts $\bar\theta_{t,k}$. The result of this section reproduces, at the level of Bayesian experts, the classical dichotomy of online learning in which the curvature of the loss separates the slow and fast regimes: convexity suffices to transfer the mean-loss bound, while
exp-concavity transfers the annealed-loss bound.

\begin{itemize}
    \item Suppose $\theta\mapsto\ell_t(\theta)$ is convex for every $t$. Then Jensen's inequality gives 
        \begin{align*}
            \ell_t(\bar\theta_t)\le\E_{\theta\sim q_t}\sbr{\ell_t(\theta)} = L_t(q_t)
        \end{align*}
    for every $t$. Hence, \cref{thm:regret_mean} gives a slow upper bound of the cumulative loss $\sum_{t\in\cI} \ell_t(\bar\theta_t)$ of the posterior mean.

    \item Suppose $\theta\mapsto\ell_t(\theta)$ is $\gamma$-exp-concave for every $t$, i.e., $\theta\mapsto\exp(-\gamma\ell_t(\theta))$ is concave. Then Jensen's inequality applied to $q_t$ gives
    \begin{align*}
        \exp(-\gamma\ell_t(\bar\theta_t)) \ge \E_{\theta\sim q_t}\sbr{\exp(-\gamma\ell_t(\theta))}
            = \exp(-\gamma\Lambda_t(q_t)),
    \end{align*}    
    that is,
    \begin{align*}
        \ell_t(\bar\theta_t)\le\Lambda_t(q_t).
    \end{align*}
    Hence, \cref{thm:regret_annealed} gives a fast upper bound of the cumulative loss.
\end{itemize}

These two observations delimit precisely when the mean-loss aggregation is genuinely needed for point prediction: for losses that are convex but not exp-concave. The hinge and pinball loss functions are the canonical examples; being piecewise linear, they admit no exp-concavity at any $\gamma>0$, so the fast bound is unavailable, and the slow $\sqrt{T}$ rate is the price of this weaker curvature. By contrast, when the loss function itself is exp-concave, the annealed-loss aggregation yields the fast rate even though the target is a point predictor.

\section{Application to online conformal inference}
\label{sec:conformal}

In this section, we apply the mean-loss aggregation of Bayesian experts to online conformal inference, which aims to construct prediction sets for the outcome variable with valid long-run coverage under distribution shift. Suppose that at each time $t$,  a pair of an input and output variables $(X_t, Y_t)\in \mathcal X \times \mathcal Y$ arrives. For any threshold $\theta \ge 0$, define the prediction set
\begin{align*}
    \hat C_t(\theta):= \{ y \in \mathcal{Y} : S_t(X_t, y) \le \theta \},
\end{align*} where $S_t : \mathcal X \times \mathcal Y \to [0, R]$ is a bounded conformal score function.
After observing $Y_t$, define the minimal covering threshold as
\begin{align*}
    r_t := \inf\{\theta\ge 0:\ Y_t\in \hat C_t(\theta)\}.
\end{align*}
For the target miscoverage level $\alpha\in(0,1)$, we define the pinball loss
    \begin{align*}
    \ell_{\alpha}(\theta,r)
   & :=  
    \begin{cases}
    \alpha(\theta-r), & \theta\ge r,\\
    (1-\alpha)(r-\theta), & \theta< r
    \end{cases}\\
   & = \alpha(\theta - r)_+ + (1 - \alpha)(r - \theta)_+
\end{align*}
and use this loss function for updating and evaluating experts. For an expert $k$, we consider a variational distribution of the form $\N(\psi,\tau_k^2)$ with a pre-specified standard deviation $\tau_k>0$ and update its mean parameter $\psi$ using the streaming variational Bayes (SVB) algorithm. We write the expected pinball loss under the variational distribution $\N(\psi,\tau_k^2)$ as
    \begin{align}
    \label{eq:svb_aci_update}
        \bar{L}_{t,k}(\psi) := \mathbb{E}_{\theta \sim \N(\psi,\tau_k^2)}[\ell_\alpha(\theta, r_t)].
    \end{align}
We perform the SVB update with per-step KL proximal term:
    \begin{align}
    \label{eq:svb_aci}
          \psi_{t+1,k} = \argmin_{\psi\in \R}
  \Bigl\{\bigl\langle  \frac{\d}{\d\psi} \bar{L}_{t,k}(\psi_{t,k}),\psi \bigr\rangle
    + \frac{1}{\eta_k} \mathrm{KL}(\N(\psi,\tau_k^2), \N(\psi_{t,k}, \tau_k^2))
  \Bigr\},
    \end{align}  
where $\eta_k>0$ is the learning rate of the expert $k$. Then the expert $k$ constructs a prediction interval for the next datum $Y_{t+1}$ based on the variational posterior $q_{t+1,k}=\N(\psi_{t+1,k}, \tau_k^2)$.  The SVB update \eqref{eq:svb_aci_update} has a closed-form expression as given in the next proposition.

\begin{proposition}[One-step mean update]
\label[proposition]{lem:cp_svb_one_step}
The SVB update \eqref{eq:svb_aci}  is written as
    \begin{align}
    \label{eq:svb_aci_update}
        \psi_{t+1,k}
        = \psi_{t,k}
    - \eta_k \tau_k^2 \sbr[3]{\alpha - \Phi\del[2]{\frac{r_t - \psi_{t,k}}{\tau_k}} }.
    \end{align}
\end{proposition}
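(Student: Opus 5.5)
The plan is to solve the one-dimensional strictly convex minimization in \eqref{eq:svb_aci} directly, and to compute the derivative of the smoothed pinball loss $\bar L_{t,k}$ in closed form.

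\textbf{Step 1: the KL term.} Both distributions in the proximal term are Gaussian with the common variance $\tau_k^2$. Hence
\begin{align*}
\KL\bigl(\N(\psi,\tau_k^2),\N(\psi_{t,k},\tau_k^2)\bigr)=\frac{(\psi-\psi_{t,k})^2}{2\tau_k^2}.
\end{align*}
The objective in \eqref{eq:svb_aci} is therefore $g\psi+(\psi-\psi_{t,k})^2/(2\eta_k\tau_k^2)$, where $g:=\frac{\d}{\d\psi}\bar L_{t,k}(\psi_{t,k})$. This is a strictly convex quadratic in $\psi$. Setting its derivative to zero gives the unique minimizer
\begin{align*}
\psi_{t+1,k}=\psi_{t,k}-\eta_k\tau_k^2\,g.
\end{align*}

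\textbf{Step 2: the gradient $g$.} Write $\theta=\psi+\tau_k Z$ with $Z\sim\N(0,1)$, so that $\bar L_{t,k}(\psi)=\E[\ell_\alpha(\psi+\tau_k Z,r_t)]$. The map $\theta\mapsto\ell_\alpha(\theta,r_t)$ is Lipschitz with constant $\max\{\alpha,1-\alpha\}$. It is differentiable except at $\theta=r_t$, with derivative $\alpha\ind\{\theta>r_t\}-(1-\alpha)\ind\{\theta<r_t\}$. Since the kink is a null event under the Gaussian, dominated convergence lets us differentiate under the expectation:
\begin{align*}
g=\alpha\,P(\theta>r_t)-(1-\alpha)\,P(\theta<r_t)=\alpha-P(\theta<r_t)=\alpha-\Phi\Bigl(\frac{r_t-\psi_{t,k}}{\tau_k}\Bigr).
\end{align*}

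\textbf{Cross-check.} As an alternative, $\bar L_{t,k}$ can be written via the standard normal loss function, $\E(\theta-r)_+=\tau\phi(u)-(r-\psi)(1-\Phi(u))$ with $u=(r-\psi)/\tau$. Differentiating this expression directly yields the same formula for $g$.

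Substituting $g$ into Step 1 gives the claimed update \eqref{eq:svb_aci_update}.

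The only step needing care is justifying the exchange of derivative and expectation at the kink of the pinball loss. The Lipschitz bound together with dominated convergence, or the explicit closed form above, settles it.
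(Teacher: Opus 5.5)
Your proof is correct and follows the paper's argument: the equal-variance Gaussian KL turns the SVB objective into a strictly convex quadratic whose minimizer is a gradient step of size $\eta_k\tau_k^2$, and the gradient of the Gaussian-smoothed pinball loss is $\alpha-\Phi\bigl((r_t-\psi_{t,k})/\tau_k\bigr)$. The one difference is in how you get that gradient. The paper derives the closed form of the expected pinball loss and differentiates it, which is your cross-check; it also reuses that closed form in its smoothing-error lemma. Your primary route instead differentiates under the expectation by dominated convergence, which is shorter and applies to any Lipschitz loss whose kink is a null event under the smoothing distribution.
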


Note that when $\tau_k \to 0$ and $\eta_k\tau_k^2\to\eta$, we have $ \Phi\del[2]{\frac{r_t - \psi_{t,k}}{\tau_k}}\to \ind(r_t > \psi_{t,k}) $, thus  \eqref{eq:svb_aci_update} recovers the standard ACI update rule \citep{gibbs2021adaptive}
        \begin{align*}
       \psi_{t+1,k}=\psi_{t,k}-\eta(\alpha - \ind(r_t > \psi_{t,k})).
    \end{align*}
Thus, the SVB update can be interpreted as a Bayesian-smoothed version of the ACI update rule. Accordingly, we refer to the SVB online conformal inference procedure as Bayes-ACI.

We aggregate multiple Bayes-ACI experts over a certain grid $\{(\eta_k, \tau_k): k\in[K]\}$ using the mean-loss aggregation algorithm. Specifically, the weights are updated as 
    \begin{align*}
          \bar\omega_{t,k} &\propto \omega_{t,k}\exp\del[1]{-\gamma \bar{L}_{t,k}(\psi_{t,k})},\\
  \omega_{t+1,k} &= (1 - \sigma)\bar\omega_{t,k} + \sigma/K
    \end{align*}
and the aggregated variational posterior $q_t:=\sum_{k=1}^K \omega_{t,k}q_{t,k}=\sum_{k=1}^K \omega_{t,k}\N(\psi_{t,k}, \tau_k^2)$ is used for online conformal inference. Likewise, this can be viewed as a Bayesian-smoothed analog of DtACI \cite{gibbs2024conformal}, which addresses the sensitivity of ACI to the step size $\eta$ by aggregating an expert grid over $\eta$. Thus, we refer to this aggregation approach as Bayes-DtACI.

By \cref{thm:regret_mean},  Bayes-DtACI with 
 \begin{align*}
        \gamma \asymp \sqrt{\frac{\log(|\cI|K)}{ \sum_{t\in\cI}\sum_{k=1}^K\omega_{t,k}\E_{\theta\sim q_{t,k}}[\ell_\alpha(\theta,r_t)]^2}}, \quad \sigma \asymp 1/ |\cI|,
    \end{align*}
attains the regret bound
    \begin{align}
          \sum_{t\in\cI}\E_{\theta\sim q_t}&[\ell_\alpha(\theta,r_t)]
       - \min_{k\in[K]} \sum_{t\in\cI}\E_{\theta\sim q_{t,k}}[\ell_\alpha(\theta,r_t)]\notag\\
       &\lesssim \sqrt{\log( |\cI|K)\sum_{t\in\cI}\sum_{k=1}^K\omega_{t,k}\E_{\theta\sim q_{t,k}}[\ell_\alpha(\theta,r_t)]^2}.
    \end{align}

Moreover,  our Bayes-ACI and Bayes-DtACI enjoy the long-term coverage guarantee.

\begin{theorem}[Long-run coverage of Bayes-ACI]
Assume that $r_t \in [0, R]$ for all time $t$ and $\psi_{1,k}\in[-\eta_k\tau_k^2-\tau_k\Phi^{-1}(\alpha),  R+\eta_k\tau_k^2-\tau_k\Phi^{-1}(\alpha)]$. Then, the long-run coverage of the $k$-th Bayes-ACI expert satisfies
\label{thm:long_run_single}
    \begin{align}
        \left|
    \frac{1}{T} \sum_{t=1}^{T}
    \mathbb{E}_{\theta \sim q_{t,k}}\bigl[\ind(r_t > \theta)\bigr] - \alpha
  \right|
  \leq \del{\frac{R}{\eta_k \tau_k^2}+2}\frac{1}{T}.
    \end{align}
\end{theorem}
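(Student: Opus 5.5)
The argument follows the standard ACI telescoping template, applied to the smoothed update of \cref{lem:cp_svb_one_step}. First, under $\theta\sim\N(\psi_{t,k},\tau_k^2)$ we have
\begin{align*}
\E_{\theta\sim q_{t,k}}[\ind(r_t>\theta)] = \Phi\del[1]{(r_t-\psi_{t,k})/\tau_k}.
\end{align*}
Writing $c_k:=\eta_k\tau_k^2$, the update therefore reads
\begin{align*}
\psi_{t+1,k}=\psi_{t,k}-c_k\bigl[\alpha-\E_{q_{t,k}}[\ind(r_t>\theta)]\bigr].
\end{align*}
Summing over $t=1,\dots,T$ telescopes to
\begin{align*}
\frac1T\sum_{t=1}^T\E_{q_{t,k}}[\ind(r_t>\theta)]-\alpha=\frac{\psi_{T+1,k}-\psi_{1,k}}{c_k T}.
\end{align*}
So it suffices to show that every iterate stays in the interval
\begin{align*}
J_k:=[-c_k-\tau_k\Phi^{-1}(\alpha),\,R+c_k-\tau_k\Phi^{-1}(\alpha)],
\end{align*}
whose length is $R+2c_k$. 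This gives $|\psi_{T+1,k}-\psi_{1,k}|\le R+2c_k$, and dividing by $c_kT$ yields exactly $(R/(\eta_k\tau_k^2)+2)/T$.

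The main step is the invariance of $J_k$, which I prove by induction on $t$, with the base case given by the hypothesis on $\psi_{1,k}$. Set $a:=-\tau_k\Phi^{-1}(\alpha)$. Since $r_t\in[0,R]$, the increment is $c_k[\Phi((r_t-\psi)/\tau_k)-\alpha]\in[-c_k,c_k]$, so a single step moves $\psi$ by at most $c_k$. I then split into cases.

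\emph{Upper side.} If $\psi_{t,k}>R+a$, then $(r_t-\psi_{t,k})/\tau_k<(R-R-a)/\tau_k=\Phi^{-1}(\alpha)$. Hence $\Phi(\cdot)<\alpha$ and $\psi$ strictly decreases, while staying above $\psi_{t,k}-c_k\ge R+a-c_k\ge$ the lower endpoint of $J_k$. If instead $\psi_{t,k}\le R+a$, one step moves it by at most $c_k$, so $\psi_{t+1,k}\le R+a+c_k$.

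\emph{Lower side.} If $\psi_{t,k}<a$, then $(r_t-\psi_{t,k})/\tau_k>-a/\tau_k=\Phi^{-1}(\alpha)$, so $\psi$ increases by at most $c_k$ and cannot overshoot the upper endpoint. This uses $a-c_k+c_k=a\le R+a+c_k$. If instead $\psi_{t,k}\ge a$, then $\psi_{t+1,k}\ge a-c_k$.

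Together these cases show $\psi_{t+1,k}\in J_k$, which closes the induction. The only delicate point is checking the monotonicity direction of $\Phi$ at the two thresholds $a$ and $R+a$, and verifying that the jump of size at most $c_k$ from the middle region $[a,R+a]$ lands inside $J_k$. This requires $R\ge0$, which holds. With the invariance established, the bound follows directly from the telescoping identity above.
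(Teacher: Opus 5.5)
Your proof is correct and follows the paper's route. The paper telescopes the smoothed update $\psi_{t+1,k}-\psi_{t,k}=\eta_k\tau_k^2\{\Phi((r_t-\psi_{t,k})/\tau_k)-\alpha\}$ and bounds $|\psi_{T+1,k}-\psi_{1,k}|\le R+2\eta_k\tau_k^2$ using invariance of the same interval $J_k$, which it proves by contradiction at the first exit time; your direct induction uses the same threshold case analysis at $a$ and $R+a$, so the difference is only one of presentation. One harmless slip: in your lower-side case the needed estimate is $\psi_{t+1,k}<a+c_k\le R+a+c_k$, not ``$a-c_k+c_k=a$'', and the upper endpoint is in any case already secured by your upper-side dichotomy.
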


\begin{theorem}[Long-run coverage of Bayes-DtACI]
\label{thm:long_run_agg}
Let $A_{k} := \eta_k \tau_k^2$, $A_{\min} := \min_k A_k$, $A_{\max} := \max_k A_k$, $\tau_{\max}:=\max_k\tau_k$, $\tau_{\min} := \min_k \tau_k$ and $C_{\alpha} := \tau_{\max}  + R + 2 A_{\max} + (\tau_{\max} - \tau_{\min})| \Phi^{-1}(\alpha)|$. Consider a modified version of Bayes-DtACI, where the parameters $\gamma$ and $\sigma$  are replaced by values $\gamma_t$ and $\sigma_t$ at time $t$. Then under the same assumption of \cref{thm:long_run_single},  the long-run coverage of Bayes-DtACI satisfies
\begin{align}
&\left|\frac{1}{T} \sum_{t=1}^{T}
\mathbb{E}_{\theta \sim q_t}\bigl[\ind(r_t > \theta)\bigr] - \alpha
\right| \notag \\
&\le 
\frac{R + 2 A_{\min}}{T A_{\min}} + \frac{ (R + A_{\max})C_{\alpha}}{A_{\min}}\frac{1}{T} \sum_{t=1}^T \gamma_t \exp({\gamma_t C_{\alpha}}) + 2\frac{R+ A_{\max}}{A_{\min}} \frac{1}{T} \sum_{t=1}^T \sigma_t .
\end{align}
In particular, if both  $\gamma_t$ and $\sigma_t$ tend to zero as $t\to\infty$, then 
    \begin{align*}
        \lim_{T \to \infty} \frac{1}{T}\sum_{t=1}^T \E_{\theta\sim q_t}[\ind(r_t > \theta)] = \alpha.
    \end{align*}
\end{theorem}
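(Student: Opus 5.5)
The plan is the summation-by-parts argument of DtACI, applied to the smoothed per-expert updates of \cref{lem:cp_svb_one_step}.

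\textbf{Step 1: rewrite the coverage error as weighted increments.} Write $A_k=\eta_k\tau_k^2$ and $\mathrm{err}_{t,k}:=\E_{\theta\sim q_{t,k}}[\ind(r_t>\theta)]=\Phi((r_t-\psi_{t,k})/\tau_k)$. By \cref{lem:cp_svb_one_step},
\begin{align*}
\mathrm{err}_{t,k}-\alpha=\frac{\psi_{t+1,k}-\psi_{t,k}}{A_k}.
\end{align*}
Set $c_k:=-\tau_k\Phi^{-1}(\alpha)$ and $u_{t,k}:=(\psi_{t,k}-c_k)/A_k$. The mixture error is linear in the weights, so
\begin{align*}
\E_{\theta\sim q_t}[\ind(r_t>\theta)]-\alpha=\sum_k\omega_{t,k}\,(u_{t+1,k}-u_{t,k}).
\end{align*}

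\textbf{Step 2: summation by parts.} Summing over $t\le T$ and rearranging gives
\begin{align*}
\sum_{t=1}^T\sum_k\omega_{t,k}(u_{t+1,k}-u_{t,k})=\sum_k\bigl(\omega_{T,k}u_{T+1,k}-\omega_{1,k}u_{1,k}\bigr)-\sum_{t=1}^{T-1}\sum_k(\omega_{t+1,k}-\omega_{t,k})\,u_{t+1,k}.
\end{align*}

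\textbf{Step 3: bound the iterates.} I reuse the invariance argument behind \cref{thm:long_run_single}. If $\psi_{t,k}>R+c_k$, then $r_t\le R$ forces $\Phi((r_t-\psi_{t,k})/\tau_k)<\alpha$, so the iterate moves down. If $\psi_{t,k}<c_k$, it moves up. Each step has size at most $A_k$. Starting from the assumed initial interval, induction gives
\begin{align*}
\psi_{t,k}-c_k\in[-A_k,\,R+A_k],\qquad\text{hence}\qquad |u_{t,k}|\le\frac{R+A_{\max}}{A_{\min}}.
\end{align*}
The boundary term in Step 2 is therefore at most $(R+2A_{\min})/A_{\min}$, after using the sharper per-$k$ bound $(R+2A_k)/A_k$ for each $k$.

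\textbf{Step 4: the weight-change term.} This is the step I expect to be the main obstacle. It requires bounding $\sum_k|\omega_{t+1,k}-\omega_{t,k}|$. From the update,
\begin{align*}
\omega_{t+1,k}-\omega_{t,k}=(1-\sigma_t)(\bar\omega_{t,k}-\omega_{t,k})+\sigma_t(1/K-\omega_{t,k}).
\end{align*}
The sharing part contributes at most $2\sigma_t$ in $\ell_1$. For the exponential-weights part, write $\bar\omega_{t,k}=\omega_{t,k}e^{-\gamma_t(\bar L_{t,k}-m_t)}/Z_t$ with $m_t:=\min_j\bar L_{t,j}$. I then need a uniform spread bound
\begin{align*}
\max_{j,k}\bigl|\bar L_{t,k}(\psi_{t,k})-\bar L_{t,j}(\psi_{t,j})\bigr|\le C_\alpha.
\end{align*}
I plan to get it from three facts. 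First, the pinball loss is $1$-Lipschitz in $\theta$. Second, $\bar L_{t,k}(\psi)$ lies within $\tau_k$ of $\ell_\alpha(\psi,r_t)$, since the $\N(0,\tau_k^2)$ smoothing costs at most $\E|Z|\tau_k\le\tau_k$. Third, by Step 3, the centers $\psi_{t,k}$ all lie in a common window of width $R+2A_{\max}+|c_k-c_j|$, and $|c_k-c_j|\le(\tau_{\max}-\tau_{\min})|\Phi^{-1}(\alpha)|$. Assembling these constants to match $C_\alpha$ exactly may need care; a cruder constant would not change the structure of the argument.

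Given the spread bound, each exponential factor lies in $[e^{-\gamma_tC_\alpha},1]$, and so does $Z_t$. Then $|\bar\omega_{t,k}-\omega_{t,k}|\le\omega_{t,k}(e^{\gamma_tC_\alpha}-1)\le\omega_{t,k}\gamma_tC_\alpha e^{\gamma_tC_\alpha}$, and summing over $k$ gives at most $\gamma_tC_\alpha e^{\gamma_tC_\alpha}$.

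\textbf{Step 5: conclude.} Multiplying these $\ell_1$ bounds by $\max|u|\le(R+A_{\max})/A_{\min}$, dividing by $T$, and combining with the boundary term gives the displayed inequality. When $\gamma_t,\sigma_t\to0$, the Cesàro averages of $\gamma_te^{\gamma_tC_\alpha}$ and $\sigma_t$ vanish, which yields the limit statement.
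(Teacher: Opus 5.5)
Your proposal is correct and is essentially the paper's proof. Your $\sum_k\omega_{t,k}u_{t,k}$ is the paper's $\tilde\psi_t$, your summation by parts is its telescoping identity, Step 3 is \cref{lem:mean_bound}, and Step 4 reproduces the bound $|\bar\omega_{t,k}-\omega_{t,k}|\le\omega_{t,k}\gamma_tC_\alpha e^{\gamma_tC_\alpha}$ from a uniform spread bound on the smoothed losses.

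The one step to tighten is that spread bound. Charging each expert's smoothing error as $\tau_k$ and adding gives $2\tau_{\max}$, not the $\tau_{\max}$ in $C_\alpha$. Since the pinball loss is convex, each smoothing error is nonnegative by Jensen. It is at most $\tau_k/\sqrt{2\pi}$ by \cref{lem:pinball_smooth_error}; your own $\sqrt{2/\pi}\,\tau_k$ bound also works once you use nonnegativity. Either way the two error terms cost at most $\tau_{\max}$ together, and you recover $C_\alpha$ exactly.
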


\section{Application to online Gaussian processes}
\label{sec:regression}

In this section, we apply the annealed-loss aggregation method in the setting of online nonparametric regression, with the goal of addressing a natural question: does the proposed aggregation procedure achieve the same type of adaptation that Bayesian nonparametric approaches attain through carefully designed hierarchical priors? We illustrate this with a concrete example of Gaussian process (GP) regression, where we run multiple experts with different bandwidth values. Aggregating GP experts over a bandwidth grid plays the role of a hyperprior on the bandwidth, and we show that the resulting mixture adapts to unknown H\"older smoothness at the minimax rate up to logarithmic factors. This matches the adaptation achieved by a fully hierarchical Bayesian construction, but is obtained here purely through sequential expert aggregation.

Suppose that we observe a stream $\{(X_t, Y_t)\}_{t=1}^T$ generated i.i.d. from
\begin{equation}\label{eq:model}
Y_t = f_\star(X_t) + \epsilon_t, \qquad \epsilon_t \overset{\text{i.i.d.}}{\sim} \N(0, \varsigma^2), \qquad X_t \overset{\text{i.i.d.}}{\sim} P_X,
\end{equation}
on $\cX = [0,1]^d$, where $f_\star \in \cC^\beta(\cX)$ is the unknown regression function with $\norm{f_\star}_\infty \le B$, and the smoothness $\beta > 0$ is unknown. For an inverse bandwidth $a>0$, we consider the rescaled Gaussian process prior $\pi_a$ on $C(\cX)$, which is the law of the centered Gaussian process with covariance kernel
    \begin{align*}
        c_a(x, x') = \exp \del[1]{-a^2 \norm{x - x'}^2}.
    \end{align*}
Let $\cA=\{a_1,\ldots,a_K\}$ be a inverse bandwidth grid. For each $a_k\in\cA$, the posterior at time $t$ with prior $\pi_{a_k}$ is the standard Gaussian process posterior given $\cD_{t-1} = \{(X_s, Y_s)\}_{s=1}^{t-1}$:
    \begin{equation*}
     \rho_{t,k}(f) \propto \pi_{a_k}(f)  \prod_{s=1}^{t-1} \phi_\varsigma (Y_s - f(X_s)).
    \end{equation*}
The resulting  posterior predictive density at time $t$ is given by
    \begin{align}\label{eq:gp-density}
        \p_{t,k}(y | x, \cD_{t-1})  =\E_{f\sim \rho_{t,k}}[\phi_\varsigma (y - f(x))].
    \end{align}
The next theorem gives the convergence rate of the KL divergence from the true conditional density $\p_\star(\cdot|x)=\phi_\varsigma(\cdot-f_\star(x))$ for a single GP expert. This is an instance of the information-consistency bounds for GP prediction of \cite{seeger2008information}. 

\begin{theorem}[Convergence rate of single GP]
\label{thm:gp_single}
Assume the model \eqref{eq:model} with $\norm{f_\star}_\infty \le B$ and $f_\star \in \cC^\beta([0,1]^d)$. Then the posterior predictive density \eqref{eq:gp-density} with rescaled GP prior of inverse bandwidth $a_k$ satisfies
\begin{equation}
	\frac{1}{T}	\sum_{t=1}^T\E_{\cD_t}\sbr{\kl\del[1]{
	\p_\star(\cdot|X_t),	\p_{t,k}(\cdot|X_t,\cD_{t-1})}} 
    \lesssim   \varepsilon_{T,k}^2 \log^{1+d} (T),
\end{equation}
 where we define
\begin{equation}\label{eq:rate}
    \varepsilon_{T,k}^2 : =  T^{-1} a_k^d + a_k^{-2\beta}
\end{equation}
and the implicit constant depends only on $\varsigma^2$, $B$, $\beta$, and $d$. 
\end{theorem}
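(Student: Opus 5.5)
Our plan is to run the classical Bayesian-mixture (information-theoretic) argument of \citet{seeger2008information} and \citet{kakade2004online}, and then combine it with the small-ball and RKHS-approximation estimates for rescaled squared-exponential GPs of \citet{van2009adaptive}.

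\textbf{Step 1: chain rule.} Since the posterior predictive densities arise from exact Bayesian updating, they multiply to the marginal likelihood,
\begin{align*}
\prod_{t=1}^T \p_{t,k}(Y_t|X_t,\cD_{t-1})=\int \prod_{t=1}^T \phi_\varsigma(Y_t-f(X_t))\,\d\pi_{a_k}(f)=:m_T.
\end{align*}
The covariates are i.i.d.\ and do not enter the likelihood of $f$, so this identity holds. By the conditioning argument of \cref{sec:method}, the cumulative expected predictive KL equals
\begin{align*}
\E\sbr[1]{\log \tfrac{\prod_t \p_\star(Y_t|X_t)}{m_T}}.
\end{align*}

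\textbf{Step 2: restrict the prior integral.} For any measurable set $\cB$, we lower-bound $m_T$ by the integral over $\cB$ and apply Jensen's inequality with respect to the normalized prior $\pi_{a_k}(\cdot\,|\,\cB)$. This gives
\begin{align*}
\E\log\frac{\prod_t\p_\star}{m_T}\le -\log\pi_{a_k}(\cB)+\frac{T}{2\varsigma^2}\sup_{f\in\cB}\|f-f_\star\|_\infty^2 .
\end{align*}
Here we used that $\E\log(\phi_\varsigma(Y-f_\star)/\phi_\varsigma(Y-f))=(f-f_\star)^2(X)/(2\varsigma^2)$ and Fubini. We take $\cB=\{f:\|f-f_\star\|_\infty\le 2\varepsilon\}$. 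A sharper alternative is the Seeger--Kakade Gaussian-measure argument, which yields $\tfrac12\|h\|_{\mathbb H_a}^2+\tfrac12\log\det(I+\varsigma^{-2}K_T)$ for RKHS elements $h$. Either route is fine; I would use the small-ball route to stay close to the cited tools.

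\textbf{Step 3: concentration function.} By the Kuelbs--Li/van der Vaart--van Zanten lemma,
\begin{align*}
-\log\pi_a(\|f-f_\star\|_\infty\le2\varepsilon)\le \inf_{h\in\mathbb H_a:\|h-f_\star\|_\infty\le\varepsilon}\tfrac12\|h\|_{\mathbb H_a}^2-\log\pi_a(\|f\|_\infty\le\varepsilon).
\end{align*}
Let $a\ge1$ and $\varepsilon$ be small. The bounds of \citet{van2009adaptive} (Lemmas 4.3--4.6 there) give two estimates:
\begin{itemize}
\item $-\log\pi_a(\|f\|_\infty\le\varepsilon)\lesssim a^d\log^{1+d}(a/\varepsilon)$;
\item there is $h$ with $\|h-f_\star\|_\infty\lesssim a^{-\beta}$ and $\|h\|^2_{\mathbb H_a}\lesssim a^d$.
\end{itemize}
The second follows by convolving $f_\star$, extended to $\R^d$ with compact support, with a suitable kernel of scale $1/a$. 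The constants depend only on $B$, $\beta$, $d$.

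\textbf{Step 4: choose $\varepsilon$ and conclude.} Take $\varepsilon\asymp a_k^{-\beta}+T^{-1/2}$, and assume $a_k\le T^{c}$ so that the log factor is at most $\log^{1+d}T$; the grid guarantees this. Then the total is
\begin{align*}
\lesssim a_k^d\log^{1+d}T + T a_k^{-2\beta} + 1.
\end{align*}
Dividing by $T$ gives $\varepsilon_{T,k}^2\log^{1+d}T$.

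\textbf{Main obstacle.} The hard part is Step 3, specifically the small-ball exponent for the rescaled squared-exponential process uniformly in $a$, with explicit $\log^{1+d}$ dependence. I would import it from \citet{van2009adaptive} rather than rederive it via entropy of the RKHS unit ball.

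A secondary subtlety is the case $a_k<1$ or $a_k$ very large. There I would either restrict to $a_k\in[1,T]$ or note that the bound becomes trivial.
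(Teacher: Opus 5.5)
Your proposal is correct and follows essentially the paper's route. You reduce to the marginal likelihood $m_T$ by the chain rule, then lower-bound $\log m_T$ using the prior renormalized to a ball around $f_\star$. Your Jensen step is exactly the paper's Donsker--Varadhan bound with that $q$, giving $-\log\pi_{a_k}(\cB)$ plus $\frac{T}{2\varsigma^2}$ times the squared radius, and you then control the prior mass with the small-ball estimates of van der Vaart and van Zanten.

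The differences are cosmetic. You use a sup-norm ball, which is contained in the paper's $\cL_2(P_X)$ ball, and you split the concentration function into its RKHS-approximation and centered small-ball terms, whereas the paper cites Section 5.1 of that work directly. Your explicit restriction $a_k\le T^{c}$, which turns $\log(a_k/\varepsilon)$ into $\log T$, is if anything more careful than the paper's treatment of that step.
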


\cref{thm:gp_single} shows a clear bias-variance trade-off in terms of the bandwidth: for large inverse bandwidth, the bias $a_k^{-2\beta}$ would be smaller but the estimation variance $T^{-1} a_k^d$ would be larger, and vice versa. But to optimize this trade-off, we are required to know the smoothness $\beta$, but this is rarely the case in practice. We now show that our Bayesian expert aggregation enables us to address this problem. Specifically,  we use the annealed-loss with the negative log-density loss, which is given by $ \Lambda_t(\rho_{t,k})=-\log\p_{t,k}(y | x, \cD_{t-1}) $ for aggregation. Then the weight update in \cref{alg:ova} with $\gamma=1$ and  $\sigma=0$ gives
    \begin{align*}
        \omega_{t+1,k}\propto \omega_{t,k}\exp(- \Lambda_t(\rho_{t,k}))
        = \omega_{t,k} \p_{t,k}(Y_{t}|X_{t},\cD_{t-1})
    \end{align*}
which leads to
\begin{equation*}
	\omega_{t,k}
	=	\frac{	\omega_{1,k}\prod_{s=1}^{t-1}	\p_{s,k}(Y_s|X_s,\cD_{s-1})	}
    {	\sum_{h=1}^K\omega_{1,h}\prod_{s=1}^{t-1}	\p_{s,h}(Y_s|X_s,\cD_{s-1})	}.
\end{equation*}
The aggregated posterior predictive density is
\begin{equation}\label{eq:aggregated-density}
	\p_t(y|x,\cD_{t-1})
	=\sum_{k=1}^K\omega_{t,k}\p_{t,k}(y|x,\cD_{t-1}).
\end{equation}

The following oracle KL risk bound is immediate from our general regret bound in \cref{thm:regret_noshare}, which implies that our aggregated GP predictive density is comparable to the best single experts.

\begin{corollary}[Oracle KL risk bound]
\label[corollary]{lem:log-loss-oracle}
It follows that
\begin{align}
	&\sum_{t=1}^T\E_{\cD_t}\sbr{\kl\del[1]{	\p_\star(\cdot|X_t),	\p_t(\cdot|X_t,\cD_{t-1})	}
	} \notag \\
	&\qquad\le\min_{k\in[K]}\cbr{\sum_{t=1}^T\E_{\cD_t}\sbr{\kl\del[1]{	\p_\star(\cdot|X_t),
	\p_{t,k}(\cdot|X_t,\cD_{t-1})}}	+\log(1/\omega_{1,k})	}.
	\label{eq:log-loss-oracle}
\end{align}
\end{corollary}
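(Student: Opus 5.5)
The argument is pathwise: first apply \cref{thm:regret_noshare} for each fixed data sequence, then take expectations and turn the excess log-loss into KL risk, as in the identity at the end of \cref{sec:method}.

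\textbf{Step 1: pathwise bound.} Use $\ell_t(f)=-\log\phi_\varsigma(Y_t-f(X_t))$, $\gamma=1$ and $\sigma=0$. Then $\Lambda_t(\rho_{t,k})=-\log \p_{t,k}(Y_t|X_t,\cD_{t-1})$. Since the annealed loss is affine inside the exponential, the mixture $\rho_t=\sum_k\omega_{t,k}\rho_{t,k}$ satisfies
\begin{align*}
\Lambda_t(\rho_t)=-\log \p_t(Y_t|X_t,\cD_{t-1}),
\end{align*}
where $\p_t$ is the aggregated density \eqref{eq:aggregated-density}. The statement of \cref{thm:regret_noshare} only uses the best expert $k^\star$, but its proof (potential telescoping) gives the bound for every $k$:
\begin{align*}
-\sum_{t=1}^T\log \p_t(Y_t|X_t,\cD_{t-1})\le -\sum_{t=1}^T\log \p_{t,k}(Y_t|X_t,\cD_{t-1})+\log(1/\omega_{1,k}).
\end{align*}
To see this directly, the product of the $\p_t$ over $t$ equals $\sum_h\omega_{1,h}\prod_t \p_{t,h}$, and this sum is at least the $k$-th term $\omega_{1,k}\prod_t\p_{t,k}$.

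\textbf{Step 2: subtract the truth's log-loss.} Add $\sum_t\log \p_\star(Y_t|X_t)$ to both sides. The comparison now reads in terms of cumulative log-likelihood ratios $\log(\p_\star/\p_t)$ versus $\log(\p_\star/\p_{t,k})$.

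\textbf{Step 3: take expectations.} Take expectations over $\cD_T$ and condition each summand on $(\cD_{t-1},X_t)$. Given these, $Y_t$ has density $\p_\star(\cdot|X_t)$, while $\p_t(\cdot|X_t,\cD_{t-1})$ and $\p_{t,k}(\cdot|X_t,\cD_{t-1})$ are fixed densities. The conditional expectation of each log ratio is therefore the corresponding KL divergence. This yields the inequality for each fixed $k$; minimizing over $k$ gives \eqref{eq:log-loss-oracle}.

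\textbf{Main obstacle: integrability.} The only delicate point is justifying the expectation step, namely that the log ratios are integrable so the tower property applies and the conditional expectations equal the KL terms. If some expert's KL risk is infinite, the bound is trivial for that $k$. Otherwise the GP predictive densities are Gaussian mixtures with positive densities and finite second moments, since $\|f_\star\|_\infty\le B$ and the noise is Gaussian, so the log ratios are integrable. The mixture's log ratio is bounded above by $\log(1/\omega_{1,k})$ plus expert $k$'s log ratio, so it is integrable as well. No other steps are nontrivial.
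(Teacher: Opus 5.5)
Your proposal is correct and follows essentially the same route as the paper: apply \cref{thm:regret_noshare} pathwise in its per-$k$ form (which its proof establishes), add $\sum_t \log \p_\star(Y_t|X_t)$, take expectations, identify each conditional expectation given $(\cD_{t-1},X_t)$ with the corresponding KL divergence, and minimize over $k$. Your integrability discussion goes beyond the paper, which leaves this point implicit. Note that the comparison with $\log(1/\omega_{1,k})$ holds only for the cumulative sum, not round by round, so per-round integrability of the mixture's log-ratio is more directly seen from $\min_h \p_{t,h} \le \p_t \le (2\pi\varsigma^2)^{-1/2}$.
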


We take the dyadic bandwidth grid $a_k = 2^k$ for $k = 1, \ldots, K$ with $K = \lceil \log_2 T \rceil$. For any $s \in [\underline{s}, \overline{s}]$, the optimal bandwidth $a_T^* \asymp T^{1/(2s+d)}$ lies within a factor of $2$ of some grid point $a_{k_\star}$, so
\begin{equation*}
     a_{k^\star}^{-2s}\asymp T^{-1}a_{k^\star}^d  \asymp  T^{-2\beta/(2\beta+d)}.
\end{equation*}
When we use the uniform prior $\omega_{1,k}=K^{-1}$, we have by \cref{thm:regret_noshare} that the aggregation regret is $ \log K=O(\log\log T)$, which is dominated by the term $T^{-2\beta/(2\beta+d)}$ for any $\beta < \infty$. Therefore, we arrive at the following adaptive bound.
 
\begin{theorem}[Adaptive convergence rate of aggregated GP]\label{thm:adaptive}
Under the same assumption of \cref{thm:gp_single}, the aggregated GP predictive density \eqref{eq:aggregated-density} over $K=\lceil\log_2T\rceil$ many experts with the dyadic bandwidth grid $a_k=2^k$ and the uniform prior weight $\omega_{1,k}=K^{-1}$ for $k\in[K]$  satisfies
\begin{equation}
	\frac{1}{T}	\sum_{t=1}^T\E_{\cD_t}\sbr{\kl\del[1]{
	\p_\star(\cdot|X_t),	\p_t(\cdot|X_t,\cD_{t-1})	}	}
	\lesssim	 T^{-2\beta/(2\beta+d)}\log^{1+d} (T),
\end{equation}
where the implicit constant depends on $\varsigma^2$, $B$, $\beta$, and $d$.
\end{theorem}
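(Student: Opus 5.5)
The plan is to combine the oracle inequality of \cref{lem:log-loss-oracle} with the single-expert rate of \cref{thm:gp_single}, and then choose a good grid point. With uniform initial weights $\omega_{1,k}=K^{-1}$, \cref{lem:log-loss-oracle} gives, for every $k\in[K]$,
\begin{align*}
\frac{1}{T}\sum_{t=1}^T\E_{\cD_t}\sbr{\kl\del[1]{\p_\star(\cdot|X_t),\p_t(\cdot|X_t,\cD_{t-1})}}
\le \frac{1}{T}\sum_{t=1}^T\E_{\cD_t}\sbr{\kl\del[1]{\p_\star(\cdot|X_t),\p_{t,k}(\cdot|X_t,\cD_{t-1})}}+\frac{\log K}{T}.
\end{align*}
By \cref{thm:gp_single}, the first term on the right is at most $C\,\varepsilon_{T,k}^2\log^{1+d}(T)$, where $C$ depends only on $\varsigma^2,B,\beta,d$. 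It is important that $C$ does not depend on $a_k$; I would check this in \cref{thm:gp_single}, since the bound is applied at a $T$-dependent bandwidth. The problem then reduces to bounding $\min_k \varepsilon_{T,k}^2$.

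Next, pick the grid point. Set $a_T^\ast := T^{1/(2\beta+d)}$. For $T\ge 2$ we have $a_T^\ast\ge 1$, and $a_T^\ast\le T\le 2^{\lceil\log_2 T\rceil}=a_K$. So some $k^\star\in[K]$ satisfies $a_T^\ast/2\le a_{k^\star}\le 2a_T^\ast$ (take $k^\star=1$ when $a_T^\ast<2$). Then
\begin{align*}
T^{-1}a_{k^\star}^d\le 2^d T^{-1}(a_T^\ast)^d = 2^d T^{-2\beta/(2\beta+d)},\qquad a_{k^\star}^{-2\beta}\le 2^{2\beta}(a_T^\ast)^{-2\beta}=2^{2\beta}T^{-2\beta/(2\beta+d)},
\end{align*}
so $\varepsilon_{T,k^\star}^2\lesssim T^{-2\beta/(2\beta+d)}$, with a constant depending only on $\beta$ and $d$.

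Finally, the aggregation cost is $\log K/T=O(\log\log T/T)$. This is at most a constant times $T^{-2\beta/(2\beta+d)}\log^{1+d}T$, because $T^{-1}\le T^{-2\beta/(2\beta+d)}$ and $\log\log T\le \log^{1+d}T$. Plugging $k=k^\star$ into the first display and combining the two terms gives the claimed bound, and no term depends on the unknown $\beta$ beyond the constants.

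The main obstacle is the uniformity of the implicit constant in \cref{thm:gp_single} over $a_k\ge 2$, including the range where $a_k$ grows with $T$. If that constant instead carried a factor like $\log^{1+d}(a_k)$ or $\log^{1+d}(T a_k)$, I would use that $a_{k^\star}\le 2T$, so such factors are still $\lesssim \log^{1+d}T$. The edge cases where $a_T^\ast$ falls outside $[a_1,a_K]$ (very small $T$) are absorbed into the constant.
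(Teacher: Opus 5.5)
Your proposal is correct and follows the paper's argument. You apply the oracle inequality of \cref{lem:log-loss-oracle} with uniform weights, invoke \cref{thm:gp_single} at the dyadic grid point within a factor of $2$ of $T^{1/(2\beta+d)}$, and absorb the aggregation cost $\log K/T = O(\log\log T/T)$ into the rate. Beyond the paper's sketch, you check that the grid actually covers $a_T^\ast$, you check that the constant in \cref{thm:gp_single} is uniform over $a_{k^\star}\le 2T$, and you correctly divide $\log K$ by $T$ before comparing it with $T^{-2\beta/(2\beta+d)}$ (the paper's text loosely compares $\log K$ itself).
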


The rate $T^{-2\beta/(2\beta+d)}$ matches the minimax rate over $\cC^\beta(\cX)$ up to a polylogarithmic factor in $T$, but the risk criterion here is the average predictive KL risk of the aggregated density rather than the $\cL_2(P_X)$ risk of an aggregated point predictor.

\begin{remark}[Unknown noise level]
In our theoretical analysis, we assume that the noise level $\varsigma$ is known to us, which is, however, not a practical assumption. To circumvent this issue, our experiments aggregate over a product grid $\cbr[0]{(a_k, \varsigma_k)}$. This is covered by the theory after lifting the parameter: let $\theta = (f, \varsigma)$ with common loss $\ell_t(f, \varsigma) = -\log\phi_\varsigma\del{Y_t - f(X_t)}$, and assign expert $k$ the prior $\pi_k = \pi_{a_k}\otimes\delta_{\varsigma_k}$. The induced predictive density is $\p_{t,k}(y|x, D_{t-1}) = \E_{f\sim\rho_{t,k}}\sbr[0]{\phi_{\varsigma_k}\del{y - f(x)}}$, and \cref{lem:log-loss-oracle}  applies verbatim since the affinity of $q\mapsto\E_{\theta\sim q}\sbr[0]{\e^{-\gamma\ell_t(\theta)}}$ is preserved on the lifted parameter space. This is the same device already implicit in the bandwidth: each expert places a point mass on its hyperparameters, and adaptation occurs at the aggregation level. 
\end{remark}

\section{Numerical experiments}
\label{sec:experiment}

\subsection{Online prediction for benchmark datasets}  
\label{sec:benchmark}

We adapt the online variational-learning benchmarks of \citet{cherief2019generalization} and \citet{haddouche2022online} to evaluate the proposed expert aggregation methods. 

\paragraph{Losses}
For binary classification, we observe $x_t,y_t$ with $x_t\in\mathbb{R}^d$ and $y_t\in\{-1,+1\}$ at round $t$. Given a parameter vector $\theta$, the learner predicts using the linear prediction $x_t^\top\theta$ and incurs the hinge loss $\ell_t(\theta)=(1-y_t x_t^\top\theta)_+$. For regression,  we observe $x_t,y_t$ with $y_t\in\mathbb{R}$ at round $t$, and the loss is $\ell_t(\theta) = (y_t - \theta^{\top} x_t)^2$.

\paragraph{Variational family}
To approximate the one-step Bayesian update, we use the mean-field Gaussian family
\begin{align*}
    \cQ
    =
    \{q_\psi=\N(m, \mathrm{diag}(\sigma^2)):
   \psi=(m,\sigma),\ m\in \mathbb R^d,\ \sigma\in \mathbb R^d_+\}.
\end{align*}
Predictions are made using the posterior mean.

\paragraph{Algorithms}
We compare expert-aggregation versions of SVB, OGA,and OGD, denoted by SVB-EA, OGA-EA, and OGD-EA, respectively. OGA and SVB are Bayesian online procedures that maintain a variational distribution over the parameter: OGA updates the variational parameter through a linearized objective with a Euclidean proximal regularizer as in \cref{eq:oga_update},  whereas SVB performs a one-step variational Bayes update with a local KL proximal regularizer as in \eqref{eq:svb_update}. We provide the detailed update formula for OGA and SVB in \cref{appendix:oga_svb}. The OGD update is given by 
\begin{align*}
    m_{t+1,k} \gets m_{t,k} - \eta \nabla_m \ell_t(m_{t,k}).
\end{align*}
which is included as a frequentist counterpart. It maintains only a point estimate, and hence does not produce a posterior distribution.

For SVB-EA and OGA-EA, we use mean-loss to aggregate our experts.
Since OGD-EA does not produce a posterior distribution to score, its aggregation weights are updated by the standard fixed-share rule using the point-prediction loss of each expert:
$$
w_{t+1,k} \propto w_{t,k} \exp \left( -\gamma_t \ell_t(m_{t,k}) \right)
$$
The base experts follow the step-size schedules of \citet{cherief2019generalization}. 
Let 
$$
\mathcal{G}=\{10^{-4} \cdot 2^{j} : j = 0, \dots, 7\}.
$$ 
For OGA-EA and OGD-EA, we use one expert for each $G\in\mathcal{G}$ with learning rate $\eta=G/\sqrt{T}$. SVB-EA uses the same grid for $G$. Each SVB expert has learning rate $\eta_t=G/(\sqrt{t}\,\sigma_t^2)$. 
All Bayesian experts are initialized with $m_0 = 0$ For the meta-learning rate, we use a rolling window of the previous 100 rounds. Specifically, for $t>100$, we set
\begin{align*}
    \gamma_t
    &=
    \sqrt{\frac{\log(100K)}
    {\sum_{s=t-100}^{t-1} \sum_{k=1}^K \omega_{s,k} L_s(q_{s,k})^2}}
\end{align*}
for the Bayesian aggregation methods, and replace $L_s(q_{s,k})$ with the point loss $\ell_s(m_{s,k})$ for OGD.
During the first 100 rounds, we set $\gamma_t=1$. The sharing parameter is fixed at $\sigma_t=10^{-3}$ throughout.

\begin{remark}
\label[remark]{remark:oga_ogd}
For squared-loss regression, the mean loss equals the point loss plus a
posterior-variance term: $L_t(q_t)= (y_t-x_t^\top m_t)^2 + x_t^\top\operatorname{diag}(\sigma_t^2)x_t.$ The variance term does not depend on $m_t$, so the gradient of the mean loss with respect to the posterior mean coincides with the gradient of the point loss. Consequently, with matched learning rates, OGA and OGD have identical mean updates and hence identical point predictions. Their uncertainty epresentations remain different because OGA additionally updates a variational distribution. This equivalence does not extend to other loss functions such as the hinge loss, for which Gaussian smoothing changes the gradient with respect to the posterior mean.
\end{remark}

\paragraph{Metrics}
For each dataset, we plot the average cumulative point-loss $t^{-1}\sum_{r=1}^t \ell_r(\hat{\theta}_r)$  and average cumulative mean-loss $t^{-1}\sum_{r=1}^t L_r(q_r)$ as a function of time.
For each expert grid, we also report the best fixed expert in hindsight,
\begin{align*}
    k^\star
    =
    \arg\min_{k\in[K]}
    \frac{1}{T}
    \sum_{t=1}^T
    \ell_t(\hat{\theta}_{t,k}),
\end{align*}
where $\hat{\theta}_{t,k}=m_{t,k}$ for both OGA and SVB experts. This oracle is not an online method and is used only as a diagnostic benchmark.

\begin{figure}
    \centering
    \includegraphics[width=\linewidth]{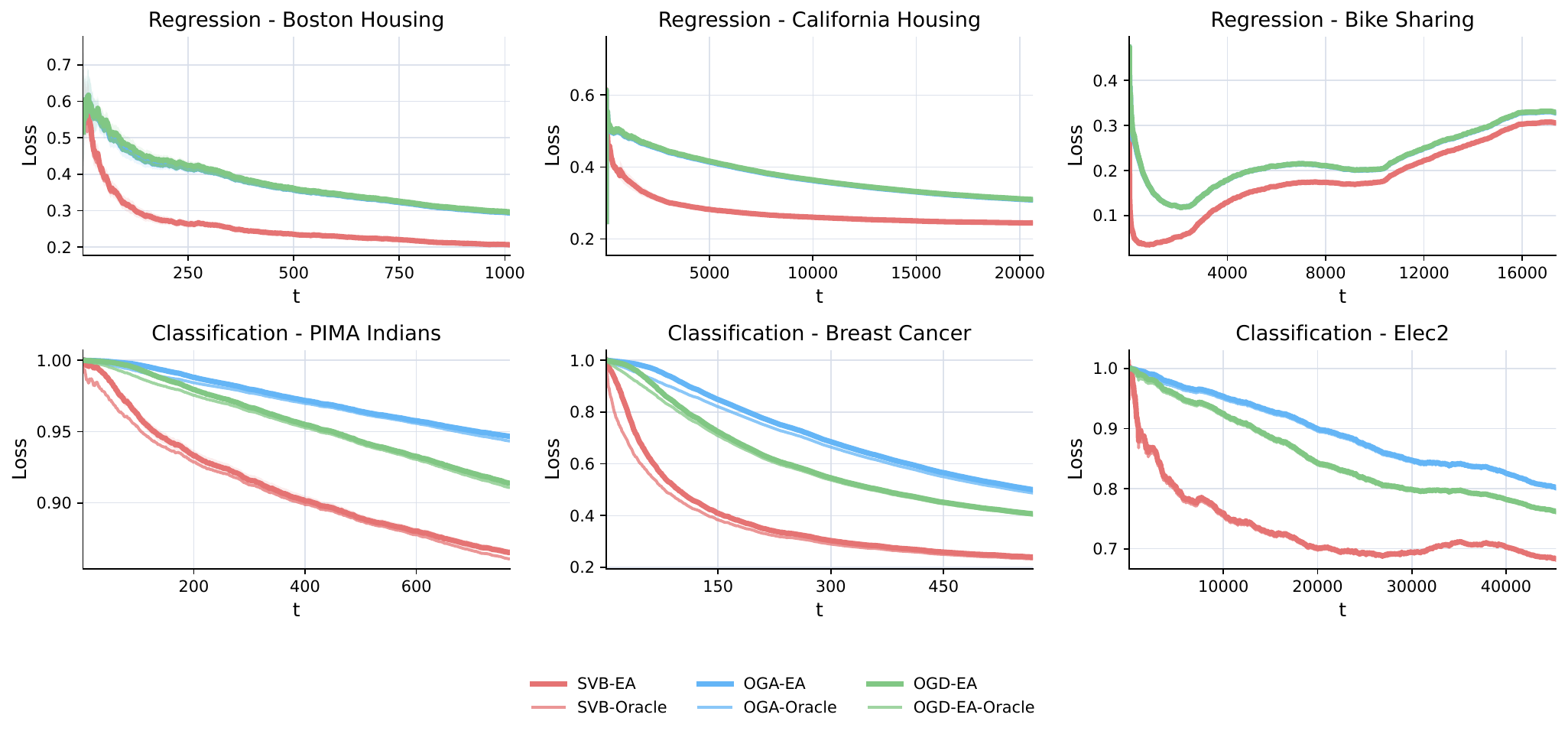}
    \caption{Average cumulative point-prediction loss on the regression and classification benchmarks. Thick curves show the expert-aggregation methods, and the corresponding thin curves show the best fixed expert in hindsight within each expert class. For squared-loss regression, OGA and OGD have identical posterior-mean updates and therefore produce overlapping point-prediction curves. (see \cref{remark:oga_ogd})}
    \label{fig:benchmark_baselines}
\end{figure}

\begin{figure}
    \centering
    \includegraphics[width=\linewidth]{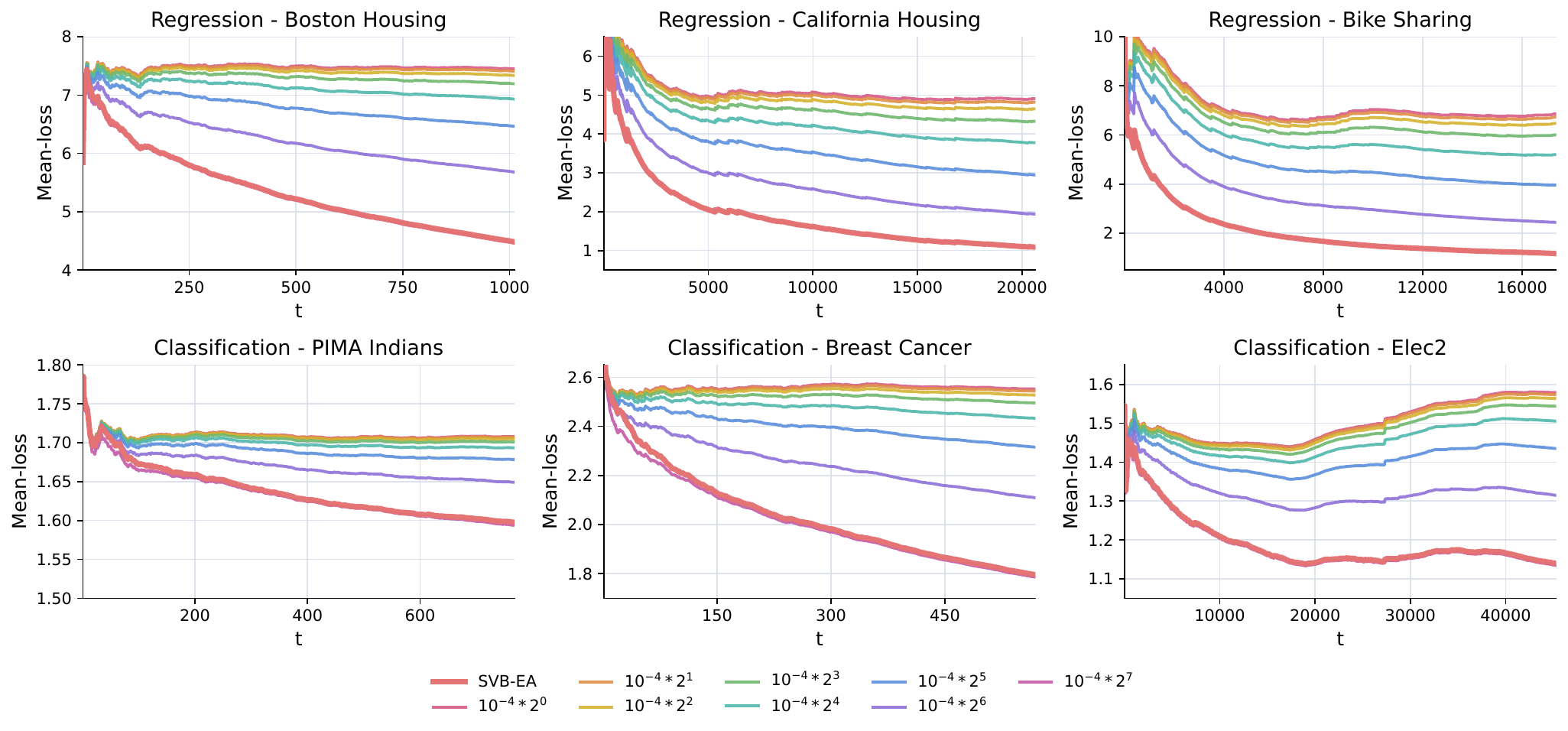}
    \caption{Average cumulative mean loss of SVB-EA (thick curve) and the eight constituent SVB experts (thin curves).}
    \label{fig:benchmark_experts}
\end{figure}

\paragraph{Results}
\cref{fig:benchmark_baselines} compares the average cumulative point-prediction losses of the three aggregation methods with those of the corresponding best fixed experts in hindsight. Across the six datasets, the adaptive aggregates remain close to their oracle counterparts despite not knowing the best learning-rate specification in advance. \cref{fig:benchmark_experts} shows that the performance of the individual experts is highly sensitive to the learning-rate multiplier, and the identity of the strongest expert varies across datasets. Nevertheless, SVB-EA consistently tracks the favorable part of the expert family and, in several datasets, achieves a lower cumulative mean loss than every fixed expert.

\subsection{Online conformal inference}

We evaluate Bayes-DtACI in a nonstationary online conformal prediction setting with abrupt changes in the residual scale  under both heavy-tailed and Gaussian errors..At each round $t$, the learner constructs a prediction interval using only past observations, observes $Y_t$, computes the conformity score $s_t$, and then updates both the individual experts and their aggregation weights.

\paragraph{Setup}
To evaluate the behavior of online conformal methods under nonstationary noise, we consider a synthetic setting with heavy-tailed errors, abrupt changes in the residual scale. We generate $T= 6000$ observations according to
\begin{align*}
     u_t \overset{\mathrm{i.i.d.}}{\sim} \mathrm{Unif}[-1,1],
    \qquad
    x_t=\sqrt{3}u_t,
\end{align*}
where each $\epsilon_t$ is generated either from a  Student-$t$ distribution with two degrees of freedom or from the standard normal distribution. The response is generated by
\begin{align*}
    Y_t = 1 + x_t + a_t(1 + 0.5 x_t) \epsilon_t,
\end{align*}
where the scale factor $a_t$ changes blockwise according to
\begin{align*}
a_t =
\begin{cases}
1.0, & 1\le t\le1500,\\
1.8, & 1501\le t\le3000,\\
0.7, & 3001\le t\le4500,\\
1.4, & 4501\le t\le6000.
\end{cases}
\end{align*}
The point forecast $\widehat Y_t$ is the mean of the ten most recent responses, and the conformity score is $s_t := |Y_t - \widehat Y_t|$. We use the posterior mean $\bar\theta_t := \E_{\theta \sim q_t}[\theta]$ as the conformal threshold.

\paragraph{Competitors}
We compare the proposed Bayes-DtACI with the original DtACI method of \cite{gibbs2024conformal}. For both methods, we use 8 experts for a fair comparison. For DtACI, we use the same grid of learning rates as in \cite{gibbs2024conformal}:
\begin{align*}
    \eta_k \in \{0.001, 0.002, 0.004, 0.008, 0.016, 0.032, 0.064, 0.128\}.
\end{align*}
For  our method, each expert is indexed by a pair $(\eta_k, \tau_k)$ and we consider the following values
    \begin{align*}
        \eta_k\in\{ 0.004, 0.008, 0.064, 0.128\},
        \quad \tau_k\in \{0.5,1\},
    \end{align*}
so that the grid of $\eta_k \tau_k^2$ is the same as that of DtACI. We set the meta-learning rate $\gamma_t$ and the sharing parameter $\sigma_t$ following \citet{gibbs2024conformal}.

\paragraph{Results}
\cref{fig:cp} summarizes the results. In both error settings, the cumulative coverage of Bayes-DtACI approaches the nominal level, while its rolling coverage responds smoothly to changes in residual scale. Under Student-$t$ errors, Bayes-DtACI maintains cumulative coverage closer to the target than DtACI. The difference is less pronounced under Gaussian errors, although Bayes-DtACI again exhibits slightly more stable coverage. These results suggest that replacing the discontinuous ACI update with its Gaussian-smoothed counterpart can improve robustness to large conformity scores generated by heavy-tailed noise.

\begin{figure}[H]
    \centering
    \includegraphics[width=1\linewidth]{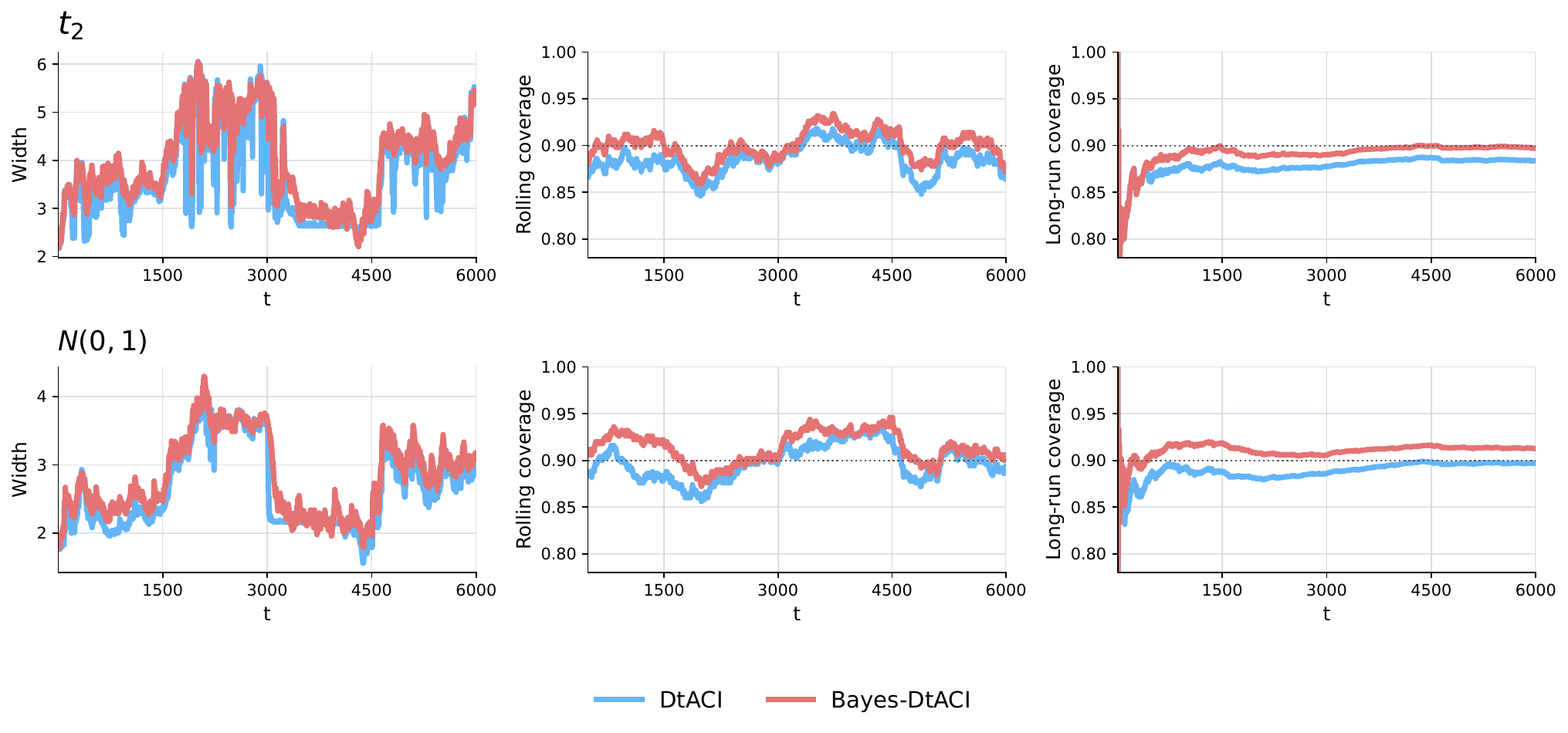}
    \caption{Performance of DtACI and Bayes-DtACI under blockwise changes in the noise scale. The top and bottom rows correspond to Student-$t$ and Gaussian errors, respectively. From left to right, the panels show the conformal threshold $\theta_t$, coverage averaged over a rolling window of 500 rounds, and cumulative coverage up to round $t$. The horizontal dotted line indicates the target coverage level $1-\alpha=0.9$.}
    \label{fig:cp}
\end{figure}

\subsection{Online GP regression}

In this section, we examine the empirical performance of the annealed-loss aggregation procedure developed in \cref{sec:regression}. We maintain a collection of online GP experts with different inverse-bandwidth specifications and combine their one-step-ahead predictive densities according to predictive log-loss. The experiments assess whether the resulting aggregate tracks the best fixed expert and reallocates its weight across bandwidths in response to the effective smoothness and length scale of the regression function.

\paragraph{Setup}
We generate online regression streams of the form
\begin{align*}
    X_t \sim \mathrm{Unif}([0,1]^{10}), 
    \qquad
    Y_t = f^\star(X_t) + \varepsilon_t,
    \qquad
    \varepsilon_t \sim \N(0, 1).
\end{align*}
We consider the following three settings with different effective length-scales and input dimensions.

\begin{enumerate}
    \item (Setting A: Friedman \citep{ki2024mars}) We use the function
    \begin{align*}
        f_A^\star(x)
        =
        10\sin(\pi x_1x_2)
        +20(x_3-0.5)^2
        +10x_4
        +5x_5.
    \end{align*}
    This function combines a nonlinear interaction, a quadratic component, and linear effects.

    \item (Setting B: Doppler \citep{ki2024mars}) We use the function
    \begin{align*}
        f_B^\star(x)
        =
        5\sin\left(
        \frac{4}{\sqrt{x_1^2+x_2^2+0.001}}
        \right)
        +7.5.
    \end{align*}
   This function oscillates increasingly rapidly near the origin and therefore favors a relatively large inverse-bandwidth.

    \item (Setting C: smooth) We use the function
    \begin{align*}
        f_C^\star(x)
        =
        2\left\{\sin\left(\frac{\pi x_1}{2}\right)-\frac{2}{\pi}\right\}
        +
        2\left\{\cos\left(\frac{\pi x_2}{2}\right)-\frac{2}{\pi}\right\}
        +
        0.5(x_3-0.5).
    \end{align*}
    This function is smooth and depends only on the first three coordinates; the remaining seven coordinates are irrelevant nuisance variables.
\end{enumerate}

Together, these settings assess whether the aggregation weights adapt to differences in the effective smoothness and length scale of the underlying regression function.
To evaluate adaptation under nonstationarity, we additionally consider a piecewise-stationary stream of length $T = 3000$, with
\begin{align*}
    f_t^\star =
    \begin{cases}
        f_A^\star, &1 \le t \le 1000, \\
        f_B^\star, &1001 \le t \le 2000, \\
        f_C^\star, &2001 \le t \le 3000.
    \end{cases}
\end{align*}

\paragraph{GP-EA}
In the stationary experiments, we initially fix the observation-noise at its data-generating value, $\varsigma^2 = 1$ for each expert, so that the experiment isolates adaptation to the GP bandwidth. Then the GP experts are indexed by the dyadic grid of inverse bandwidths
\begin{align*}
    \cA = \{2^j : j = -3, -2, \dots, 2\}.
\end{align*}
In the nonstationary experiment, we further augment the expert class with the noise-scale values $ \mathcal S=\{0.5,1,2\},$
resulting in the product grid $\cA\times\mathcal S$ for fair comparison to baseline methods. Moreover, each expert is trained using only the most recent $W=250$ observations to limit the influence of observations from previous regimes, following the sliding-window design of \citet{jun2026online}. A detailed description of the algorithm is deferred to \cref{app:gp_slide}.
We set the meta-learning rate $\gamma_t = 1$ and the sharing parameter $\sigma_t = 0$.

\paragraph{Competitors}
For the point-prediction comparison in the nonstationary experiment, we compare GP-EA with five online regression methods: Online Bagging \citep{oza2001online}, Aggregated Mondrian Forests (AMF) \citep{mourtada2021amf}, $k$-nearest-neighbor regression (KNN), the Hoeffding Adaptive Tree (HAT) \citep{bifet2009adaptive}, and the Adaptive Random Forest (ARF) \citep{gomes2018adaptive}.
Online Bagging, AMF, and ARF are ensemble procedures, whereas KNN and HAT provide local nonparametric and adaptive-tree baselines, respectively.
We use $18$ base learners for the ensemble methods.
All competing methods are implemented using the Python package \texttt{River}, version~0.20.1.

\paragraph{Metrics}
For GP-based predictive-density methods, we report average cumulative negative log-likelihood,
\[
    \frac{1}{t}\sum_{s=1}^{t}
    -\log p_{s, k}(Y_s\mid X_s).
\]
For comparisons with online prediction methods, we use the posterior mean of GP-EA and report the squared loss averaged over the most resent $250$ observations. All experiments are repeated independently $30$ times.

\paragraph{Results}
\Cref{fig:gp_nll} reports the average cumulative negative log-likelihood in the three stationary settings. Across all settings, GP-EA remains close to the best performing fixed bandwidth expert, without requiring the bandwidth to be selected in advance.
\begin{figure}
    \centering
    \includegraphics[width=1\linewidth]{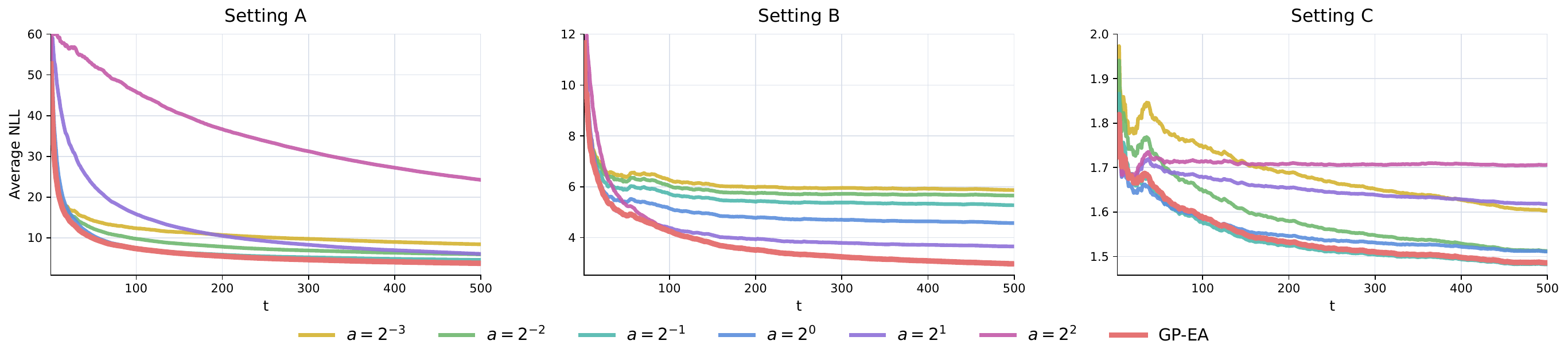}
    \caption{Average cumulative negative log-likelihood of GP-EA and the fixed-bandwidth GP experts in the three stationary synthetic settings.}
    \label{fig:gp_nll}
\end{figure}
\Cref{fig:gp_weights} shows the evolution of the aggregation weights. The three regression functions favor different inverse bandwidths. In Setting A, the weights initially explore several bandwidths before concentrating on an intermediate inverse bandwidth. Setting B, whose regression surface varies rapidly near the origin, places most of its weight on a larger inverse bandwidth. In contrast, the smoother function in Setting C favors a smaller inverse bandwidth. Thus, the aggregation weights respond
to the effective length scale of the regression function in the expected direction.
\begin{figure}
    \centering
    \includegraphics[width=1\linewidth]{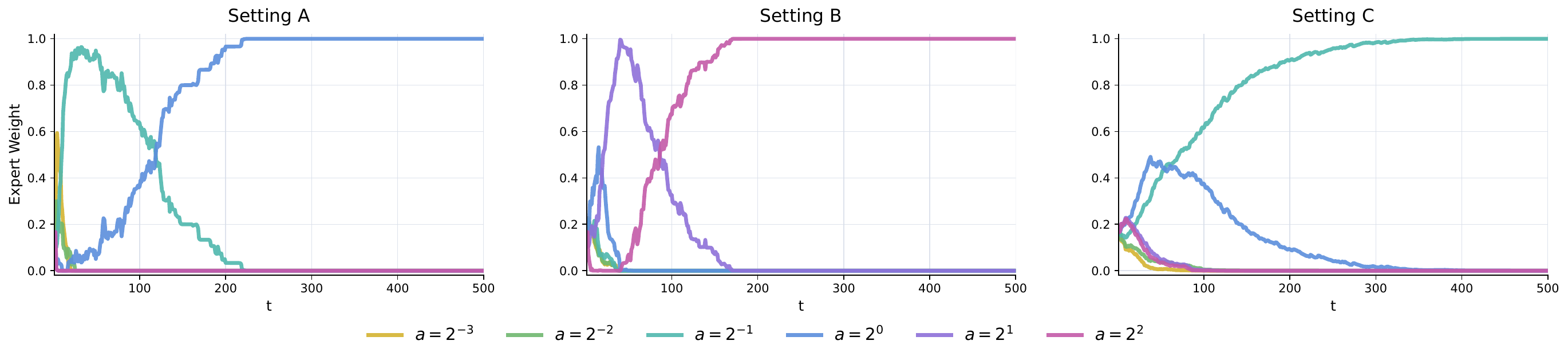}
    \caption{Evolution of the aggregation weights assigned to the six fixed-bandwidth GP
experts in the three stationary synthetic settings.} 
    \label{fig:gp_weights}
\end{figure}

\cref{fig:gp_nonstationary} presents the results for the nonstationary $A\to B\to C$ stream. GP-EA remains competitive with the online regression baselines throughout the stream and exhibits relatively small increases in rolling squared loss after the two regime changes. The right panel shows that these recoveries are accompanied by substantial reallocation of weight across inverse bandwidths. In regime A, the aggregate
places most of its mass on an intermediate bandwidth; after the transition to regime B, the mass shifts toward larger inverse bandwidths; and after the transition to the smoother regime C, it moves back toward smaller inverse bandwidths. These results demonstrate that sliding-window aggregation can revise its effective GP length scale as the data-generating mechanism changes.
\begin{figure}
    \centering
    \includegraphics[width=1\linewidth]{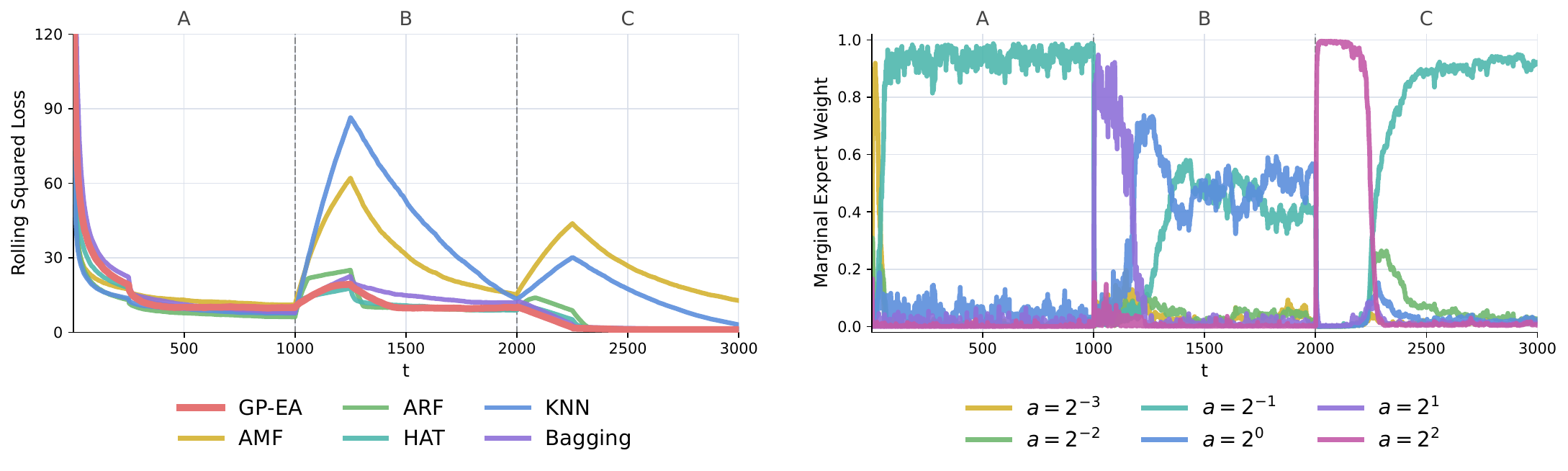}
    \caption{Performance on the nonstationary $A\rightarrow B\rightarrow C$ stream.
The left panel shows squared loss averaged over the most recent 250 rounds
for GP-EA and the competing online regression methods. The right panel shows
the GP-EA weights marginalized over the noise-scale grid for each inverse
bandwidth. Vertical dashed lines indicate the regime changes at
$t=1000$ and $t=2000$.}
    \label{fig:gp_nonstationary}
\end{figure}

\section{Conclusion}
\label{sec:conclusion}

\paragraph{Summary}
We introduced an expert-aggregation framework for adaptive Bayesian online learning. The key idea is to treat different Bayesian update rules that use different learning rates, prior distributions, variational families, or other inferential specifications as distribution-valued experts and to combine their sequential predictions according to their observed predictive performance. This two-level construction separates the within-expert Bayesian update from the across-expert adaptation mechanism and therefore provides a modular way to reduce sensitivity to inferential choices that would otherwise need to be fixed in advance.

A central message of this work is that the appropriate aggregation guarantee depends on how the performance of a Bayesian expert is evaluated. Mean-loss aggregation directly controls posterior mean functionals and applies to criteria such as the pinball loss, but generally incurs a $O(\sqrt{T})$ aggregation cost. In contrast, annealed-loss aggregation evaluates the full predictive distribution and yields a logarithmic aggregation cost. Under log loss, the annealed score coincides exactly with posterior predictive log loss, so the resulting regret bound has a direct interpretation in terms of cumulative predictive KL risk. These two scoring rules should therefore be viewed not as competing versions of the same procedure, but as tools matched to different inferential targets.

We illustrated this principle in two applications. For online conformal
inference, Gaussian streaming variational updates lead to a smoothed Bayesian counterpart of adaptive conformal inference, and aggregation over multiple learning-rate and smoothing specifications yields long-run randomized coverage. For online Gaussian process regression, aggregation over a bandwidth grid produces a predictive-density oracle inequality and adapts to unknown H\"older smoothness up to logarithmic factors. The numerical experiments further show that the aggregate tracks well-performing experts across a range of online variational, conformal, and nonparametric regression settings, without requiring oracle expert selection.

\paragraph{Future work}
Several directions for future research remain open. First, although we focus primarily on fixed-share aggregation, this is only one of  possible choices for meta learning mechanism. More adaptive online aggregation methods, including strongly adaptive and parameter-free procedures, may yield sharper guarantees when the best learning rate, prior, variational family, or other inferential specification changes over time \citep{jun2017improved,zhang2021dual,zhang2022simple}. Incorporating such methods could provide stronger local or interval-wise regret guarantees and improve the ability of the Bayesian aggregate to respond rapidly to nonstationary data streams.

Second, the present theory is developed mainly for a finite, prespecified collection of experts. Extending the framework to continuous, data-dependent, or growing expert families would permit adaptation over richer classes of learning rates, priors, variational families, and other inferential specifications.. Such extensions would require controlling both the statistical complexity of the expert class and the computational cost of maintaining the aggregate.

\appendix

\section{Effects of aggregation method on uncertainty quantification}

In this experiment, we consider an intercept-only toy online regression problem to examine how the choice of aggregation loss affects uncertainty quantification.

\paragraph{Simulation setup}
Each stream has length $T = 6000$, and results are averaged over $30$ independent replications. 
We consider two data-generating mechanisms:
\begin{enumerate}
    \item (well-specified) We generate $y_t = \varepsilon_t$ with $\varepsilon_t \sim \N(0, 0.1^2)$
    \item (misspecified) We generate $y_t = 2 s_t + \varepsilon_t$ with $s_t \sim \Unif\{-1, +1\}$.
\end{enumerate}

\paragraph{Baseline experts}
We use the SVB update of \cite{cherief2019generalization} to approximate
one-step Bayesian updating under the squared loss $\ell_t(\theta)=(y_t-\theta)^2.$
Each expert maintains a Gaussian variational posterior
$q_{t,k}(\theta)=\N(m_{t,k},\sigma_{t,k}^2)$ with $m_{t,k} \in [-20, 20]$ and $\sigma_{t,k}\in[0,1].$
Experts are indexed by the learning-rate multiplier
$G_k\in\{0.001,0.004,0.016,0.064,0.256\}.$
All experts are initialized at $m_{1,k}=0$ and $\sigma_{1,k}=1$, and predictions are made using the posterior mean $m_{t,k}$.

\paragraph{Evaluation metric}
We report regret under the point-prediction:
\begin{align*}
    \cR_t = \sum_{s = 1}^t \ell_s(\bar \theta_{s}) - \min_{k \in [K]}\sum_{s=1}^t \ell_s( \bar \theta_{s,k})
\end{align*}

\begin{figure}[H]
    \centering
    \includegraphics[width=1\linewidth]{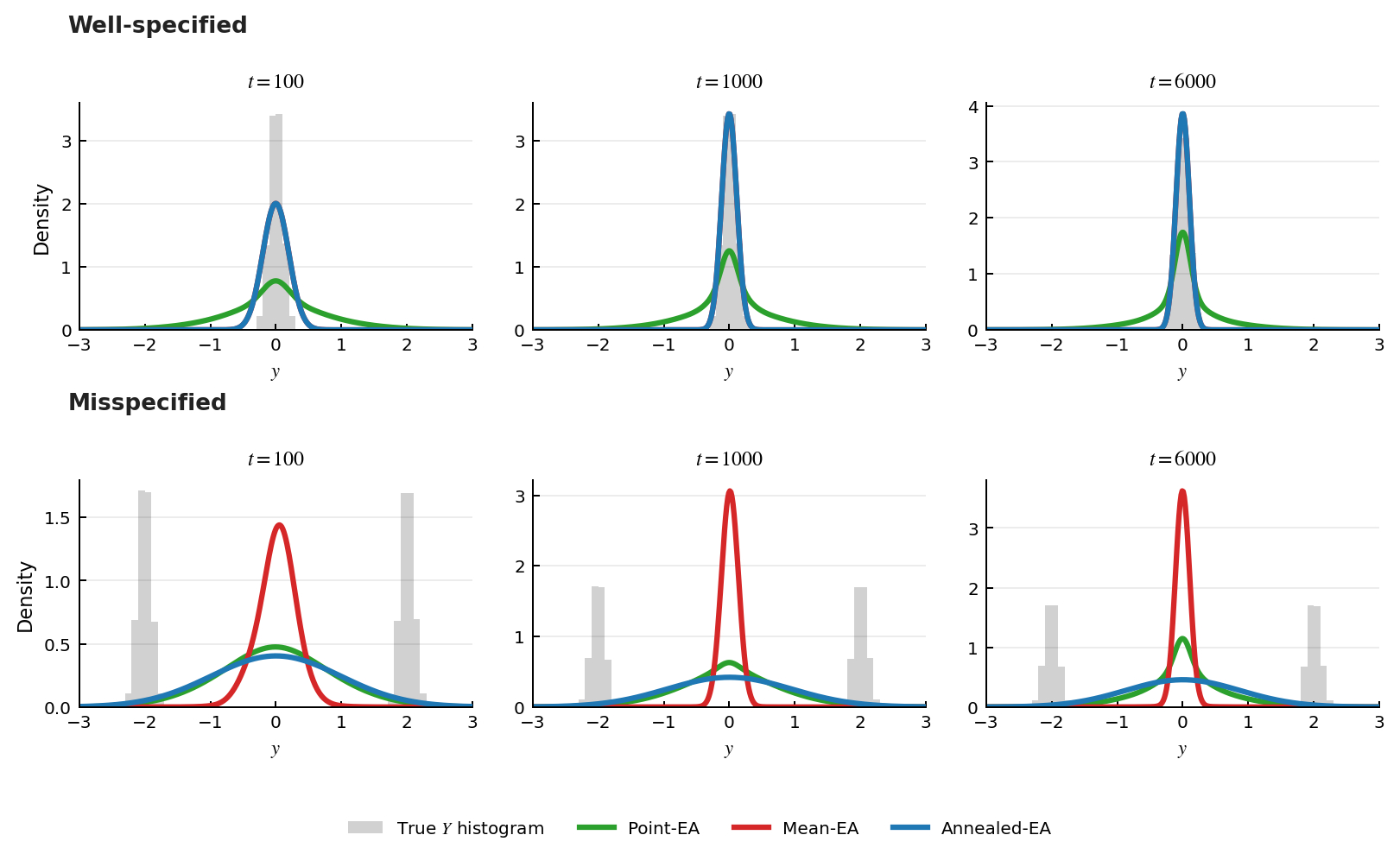}
    \caption{Visualization of $p_t(y) = \int \N(y ;\theta, 0.1^2)q_t(\theta) d\theta$. 
    Annealed-EA retains substantially larger uncertainty under misspecification, whereas Mean-EA rapidly collapses, making the annealed score safer against over-confident predictions.}
    \label{fig:placeholder}
\end{figure}

\begin{figure}[H]
    \centering
    \includegraphics[width=1\linewidth]{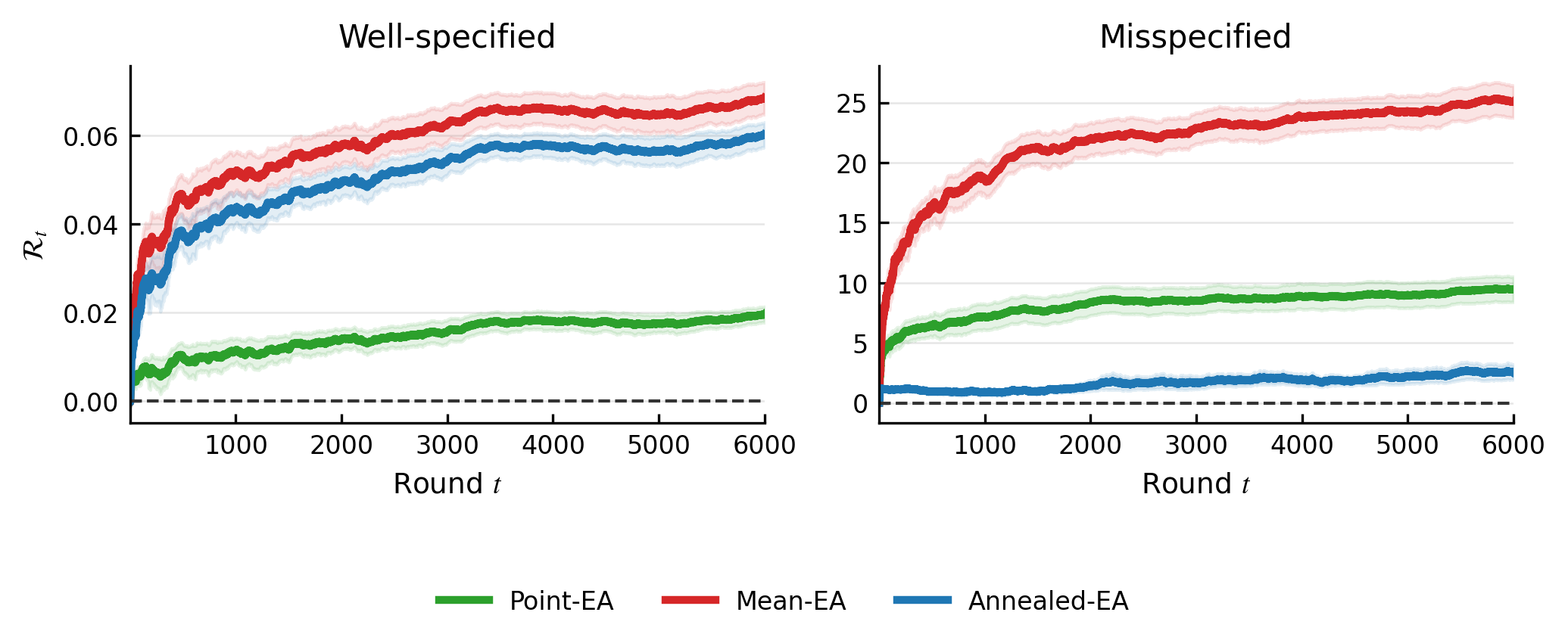}
    \caption{Regret under the point-prediction target. Point-EA has the smallest regret in the well-specified setting despite retaining a poor predictive distribution, while Annealed-EA is more robust under bimodal misspecification.}
    \label{fig:placeholder}
\end{figure}

\paragraph{Effect of dispersion}
Let $r_t^2=(y_t-\bar u_t)^2$. If prediction is evaluated only through the posterior mean, the point loss is $\ell_t(\bar \theta_t) = r_t^2$, and hence ignores the posterior variance $v_t$. The mean score, on the other hand,
\begin{align*}
    L_t(q_\psi)=r_t^2+v_t,
\end{align*}
penalizes $v_t$ linearly, independently of the residual size. The annealed-score behaves differently:
\begin{align*}
    \Lambda_t(q_\psi)
    =
    \frac{1}{2\gamma}\log(1+2\gamma v_t) + \frac{r_t^2}{1+2\gamma v_t}.
\end{align*}
It balances a logarithmic cost of dispersion against a residual term damped by $(1+2\gamma v_t)^{-1}$. Thus, when the residual is large, a more dispersed expert may incur a smaller annealed meta-loss than a more concentrated one. 
This is a meta-level effect: annealed scoring does not change the SVB update of each expert, but changes the relative weights assigned to already updated experts. 
Consequently, in misspecified settings where concentrated posterior means can be misleading, annealed scoring can help retain aggregate predictive uncertainty by giving weight to experts with non-negligible low-loss posterior mass.

\section{Closed-form expressions for the mean- and annealed-losses in the mean-field Gaussian family}
\label[appendix]{appendix:loss_compute}
This section provides the closed-form scores used in \cref{sec:experiment}. Throughout this section, we consider the mean-field Gaussian variational family.

\paragraph{Squared loss}
For regression with $y_t \in \mathbb R$, we consider the squared loss $\ell_t(\theta) = (y_t - \theta^{\top} x_t)^2$. Under $q_\psi=\N(m,\operatorname{diag}(\sigma^2))$, define $\bar  u_t = x_t^\top m$ and $v_t = x_t^\top \operatorname{diag}(\sigma^2)x_t.$ Then the mean-loss is given by
\begin{align*}
    L_t(q_{\psi}) = (y_t - \bar u_t)^2 + v_t
\end{align*}
and the annealed-loss with parameter $\gamma >0$ is given by
\begin{align*}
    \Lambda_t(q_{\psi}) = \frac{1}{2 \gamma} \log \left( 1 + 2 \gamma \, v_t \right) + \frac{(y_t - \bar u_t)^2}{1 + 2 \gamma \,v_t}.
\end{align*}

\paragraph{Hinge loss}
For binary classification with $y_t\in\{-1,+1\}$, we consider the hinge loss $\ell_t(\theta)=(1-y_t x_t^\top\theta)_+.$
Under $q_\psi=\N(m,\operatorname{diag}(\sigma^2))$, define $\bar z_t = y_t x_t^\top m$ and $v_t = x_t^\top \operatorname{diag}(\sigma^2)x_t.$ Then the mean-loss is given by
\begin{align*}
    L_t(q_\psi) =
    (1-\bar z_t) \Phi\left(\frac{1-\bar z_t}{\sqrt{v_t}}\right) +
    \sqrt{v_t} \phi\left(\frac{1-\bar z_t}{\sqrt{v_t}}\right).
\end{align*}
and the annealed-loss with parameter $\gamma >0$ is given by
\begin{align*}
    \Lambda_t(q_\psi)
    &= -\frac{1}{\gamma}
    \log \left[
    1-\Phi\left(\frac{1- \bar z_t}{\sqrt{v_t}}\right) +
    \exp\left(
        -\gamma+\gamma \bar z_t+\frac{1}{2}\gamma^2v_t
    \right)
    \Phi\left(
        \frac{1-\bar z_t-\gamma v_t}{\sqrt{v_t}}
    \right)
    \right].
\end{align*}

\section{Update rules used in \cref{sec:benchmark}}
\label[appendix]{appendix:oga_svb}
This section provides the explicit update rules for the base learners used in \cref{sec:benchmark}. Throughout, we consider the mean-field Gaussian variational family
\begin{align*}
    q_{\psi_t}(\theta) = \N(m_t, \operatorname{diag}(\sigma_t^2)).
\end{align*}
Following \cite{cherief2019generalization}, we employ the following closed-form updates for SVB and OGA, together with standard online gradient descent as a frequentist baseline.

\paragraph{Streaming Variational Bayes (SVB)}
The mean and standard deviation parameters are updated as
\begin{align*}
m_{t+1} &\gets m_t - \eta \sigma_t^2 \nabla_m L_t(q_{\psi_t}), \\
\sigma_{t+1} &\gets \sigma_t h \left( \frac{1}{2} \eta \sigma_t \nabla_{\sigma} L_t (q_{\psi_t}) \right),
\end{align*} where $h(x) := \sqrt{1 + x^2} - x$ is applied coordinatewise.

\paragraph{Online Gradient Approximation (OGA)}
OGA performs Euclidean gradient updates on the mean-loss:
\begin{align*}
m_{t+1} &\gets m_t - \eta s^2 \nabla_m L_t(q_{\psi_t}), \\
\sigma_{t+1} &\gets \sigma_t - \eta s^2 \nabla_\sigma L_t(q_{\psi_t}),
\end{align*}
where $s^2$ denotes the prior variance.

\section{Efficient algorithm for online GP regression}

\subsection{Posterior predictive computation}
The predictor at time $t$ is the posterior predictive mean
\begin{equation}\label{eq:predictor}
    \hat f_{t,k}(x) = \E_{ q_{t,k}}[f(x)]
=\mathbf{k}_{t-1,k}(x)^\top \del{K_{t-1,k}+\varsigma^{2}I_{t-1}}^{-1} Y_{1:t-1},
\end{equation}
with $\mathbf{k}_{t-1,k}(x)=\del{c_{a_k}(x,X_1),\ldots,c_a(x,X_{t-1})}^\top$ and $K_{t-1,k}=\del{c_{a_k}(X_i,X_j)}_{i,j=1}^{t-1}$. The posterior predictive variance at $x$ is given by
    \begin{align}
          v_{t,k}(X_t) = c_{a_k}(X_t, X_t) - \mathbf{k}_{t-1,k}^\top \bigl(K_{t-1,k} + \varsigma^2 I_{t-1}\bigr)^{-1} \mathbf{k}_{t-1,k}.
    \end{align}
Because $q_{t,k}$ is Gaussian, the predictive density is itself Gaussian:
    \begin{equation}\label{eq:predictive}
        \p_{t,k}(y | x, \cD_{t-1})  =  \frac{1}{\sqrt{\varsigma^2 + v_{t,k}(x)}} \phi\del{\frac{y - \hat f_{t,k}(x)}{\sqrt{\varsigma^2 + v_{t,k}(x)}}}.
    \end{equation}

\subsection{Sliding-window updates for GP}
\label[appendix]{app:gp_slide}
Given a window size $W$,  we define the active index set
\begin{align*}
    \cI_t := \{ \max\{t-W, 1\}, \max\{t-W, 1\} +1, \dots, t-1\} \quad n_t := | \cI_t|
\end{align*}
at round $t$, with the convention $\cI_t=\{1,\dots,t-1\}$  when $W=\infty$.
For a GP expert indexed by the inverse bandwidth $a_k$ and noise variance $\varsigma_k^2$, let
\begin{align*}
    A_{t,k} &:= K_{\cI_t, k} + \varsigma_k^2 I_{n_t}, \quad Q_{t,k}:= A_{t,k}^{-1},
\end{align*}
where $K_{\cI_t,k} := \bigl(c_{a_k}(X_i,X_j)\bigr)_{i,j\in \cI_t}.$
After observing $(X_t, Y_t)$, the window is advanced from $\cI_t$ to $\cI_{t+1}$. When the window is full, the oldest observation is first removed. Ordering the observations so that the oldest observation appears first, partition
\begin{align*}
    Q_{t,k}
    = \begin{pmatrix}
        q_{11} & q_{12}^\top \\
        q_{12} & Q_{22}
    \end{pmatrix}.
\end{align*}
The inverse corresponding to the retained observations is obtained by the \textit{downdate}
\begin{align*}
    \check Q_{t,k}
    = Q_{22} - \frac{q_{12}q_{12}^\top}{q_{11}}
\end{align*}
due to block matrix inversion formula. For $t \le W$, no observation is removed and we simply set $\check Q_{t,k} = Q_{t,k}.$
Let $\check{\mathcal I}_t := \cI_t \setminus \{t-W\}$ denote the retained index set and define
\begin{align*}
    \mathbf u_{t,k} &:=
    \bigl(c_{a_k}(X_i,X_t)\bigr)_{i\in\check{\mathcal I}_t}, \\
    d_{t,k} &:=
    c_{a_k}(X_t,X_t)+\varsigma_k^2 - \mathbf u_{t,k}^\top \check Q_{t,k}\mathbf u_{t,k}.
\end{align*}
Appending the new observation gives the rank-one block update
\begin{align*}
    Q_{t+1,k}
    =
    \begin{pmatrix}
        \check Q_{t,k} +d_{t,k}^{-1}(\check Q_{t,k}\mathbf u_{t,k})(\check Q_{t,k}\mathbf u_{t,k})^\top
        & -d_{t,k}^{-1}\check Q_{t,k}\mathbf u_{t,k} \\
        -d_{t,k}^{-1} \mathbf u_{t,k}^\top\check Q_{t,k}
        & d_{t,k}^{-1}
    \end{pmatrix}.
\end{align*}

\section{Proofs for \cref{sec:regret}}

\subsection{Proof of \cref{thm:regret_mean}}

\begin{proof}
Let $L_{t,k}:=L_t(q_{t,k})$ for simplicity.
We recursively define a sequence of vectors $(z_{t,1},\dots, z_{t,K})$ as, starting from an initial value $(z_{1,1},\dots,z_{1,K})=(\omega_{1,1},\dots, \omega_{1,K})$,
    \begin{align*}
    \bar{z}_{t,k}&:=z_{t,k}\exp( -\gamma L_{t,k})\\
        z_{t+1,k}&:=(1-\sigma)\bar{z}_{t,k}+\sigma\frac{1}{K} \bar{Z}_t
    \end{align*}
with $ \bar{Z}_t:=\sum_{k\in[K]} \bar{z}_{t,k}.$ By construction, we have 
    \begin{align}
    \label{eq:nconst_eq}
        Z_{t+1} := \sum_{k\in[K]}z_{t+1,k}
        = (1-\sigma)\sum_{k\in[K]}\bar{z}_{t,k}+\sigma\bar{Z}_t
       = \bar{Z}_t.
    \end{align}
Hence, letting $\bar\omega_{t,k}=z_{t,k}\exp(-\gamma L_{t,k})/\bar{Z}_t$, this recursively updated procedure recovers our weight update algorithm as
    \begin{align*}
        \omega_{t+1,k}=\frac{  z_{t+1,k}}{\sum_{h\in[K]}z_{t+1,h}}
        &=(1-\sigma)\frac{z_{t,k} \exp( -\gamma L_{t,k})}{\bar{Z}_t}+\sigma\frac{1}{K}\\
        &=(1-\sigma)\bar\omega_{t,k}+\sigma\frac{1}{K}.
    \end{align*}
Then the proof follows the same lines of Lemma 2 of \cite{gibbs2024conformal} and Lemma A.2 of \cite{gradu2023adaptive}. By the inequalities $\exp(-x) \le 1- x +x^2$ and $1- y \le \exp(-y)$, we have
    \begin{align*}
        \frac{Z_{t+1}}{Z_t}
        =\frac{\bar{Z}_t}{Z_t} 
        &=\sum_{k\in[K]}\frac{z_{t,k}\exp( -\gamma L_{t,k})}{Z_t}\\
       &= \sum_{k\in[K]} \omega_{t,k} \exp( -\gamma L_{t,k}) \\
        &\le \exp\del[3]{-\gamma  \sum_{k\in[K]} \omega_{t,k}  L_{t,k} + \gamma^2 \sum_{k\in[K]} \omega_{t,k} L_{t,k}^2}.
    \end{align*}
where we use the identity \eqref{eq:nconst_eq} for the first equality. Thus, inductively, we have
    \begin{align*}
    \frac{Z_{u+1}}{Z_v}
         \le \exp \del[3]{ - \sum_{t \in\cI} \sum_{k\in[K]} (-\gamma \omega_{t,k} \, L_{t,k}+  \gamma^2 \omega_{t,k} L_{t,k}^2)}.
    \end{align*}
On the other hand, since $z_{t+1,k} \ge z_{t,k} (1-\sigma) \exp(-\gamma L_{t,k})$, we have
    \begin{align*}
        \frac{Z_{u+1}}{Z_v} \ge \frac{z_{u,k}}{Z_v} 
        &\ge (1-\sigma)^{|\cI|} \exp \del[2]{  - \sum_{t \in\cI} \gamma L_{t,k} }\frac{z_{v,k}}{Z_v}\\
        &\ge (1-\sigma)^{|\cI|} \exp \del[2]{  - \sum_{t \in\cI} \gamma L_{t,k} }\frac{\sigma}{K}.
    \end{align*}
    Combining these two inequalities and taking a logarithm yields
    \begin{align*}
        \log(\sigma/K) + |\cI| \log(1-\sigma) - \sum_{t\in\cI} \gamma L_{t,k} \le  \sum_{t\in\cI} \sum_{k \in [K]} \del{-\gamma \omega_{t,k} L_{t,k} + \gamma^2 \omega_{t,k} L_{t,k}^2}.
    \end{align*}
Finally, by the inequality $\log(1-\sigma) \ge -2\sigma$ for $\sigma \in[0, 1/2]$, 
    \begin{align*}
        \sum_{t \in\cI} L_{t}(q_t)& =  \sum_{t \in\cI} \sum_{k \in [K]} \omega_{t,k} L_{t,k}
        \le \sum_{t \in\cI} L_{t,k} 
        + \gamma \sum_{t \in\cI}\sum_{k \in [K]}\omega_{t,k} L_{t,k}^2+ \frac{1}{\gamma} (\log(K/\sigma) + 2 |\cI|\sigma).
    \end{align*}
 Taking the minimum with respect to $k$ on both sides, we obtain the desired result.
\end{proof}

\subsection{Proof of \cref{thm:regret_annealed}}

\begin{proof}
Write $\Lambda_{t,k} := \Lambda_t(q_{t,k})$ for simplicity. Since $q_t = \sum_k \omega_{t,k}q_{t,k}$ and
$q \mapsto \mathbb{E}_{\theta\sim q}\sbr[1]{\e^{-\gamma\ell_t(\theta)}}$ is linear, it follows that
\begin{align}
\label{eq:affine_q}
    \exp\del[1]{-\gamma\Lambda_t(q_t)}
    = \sum_{k=1}^K \omega_{t,k}\,\mathbb{E}_{\theta\sim q_{t,k}}\sbr[1]{\exp(-\gamma\ell_t(\theta))}
    = \sum_{k=1}^K \omega_{t,k}\exp(-\gamma\Lambda_{t,k}),
\end{align}
regardless of how $\omega_t$ was produced. By the definition of the weight update,
    \begin{align*}
        \bar\omega_{t,k} =\frac{ \omega_{t,k}\exp(-\gamma\Lambda_{t,k})}{\sum_{h=1}^K \omega_{t,h}\exp(-\gamma\Lambda_{t,h})}
        = \omega_{t,k}\exp(-\gamma(\Lambda_{t,k}-\Lambda_t(q_t)))
    \end{align*}
Moreover, by the definition of $\omega_{t+1,k} = (1-\sigma)\bar\omega_{t,k} + \sigma/K$, we have for any $t$,
\begin{align*}
    \omega_{t+1,k} \ge(1-\sigma)\bar\omega_{t,k} = (1-\sigma)\omega_{t,k} \exp(-\gamma(\Lambda_{t,k}-\Lambda_t(q_t))).
\end{align*}
Fix $k$ and the interval $I = [u,v]$. By  chaining the above bound over
$t = u, \dots, v$, we have
\begin{align*}
    1 \ge \bar\omega_{v,k}
    \ge\omega_{u,k}(1-\sigma)^{\abs{I}-1}
  \exp\del[2]{-\gamma\sum_{t\in\cI}\{\Lambda_{t,k}-\Lambda_t(q_t)\}}.
\end{align*}
Moreover,  by the definition of $\omega_{t+1,k} = (1-\sigma)\bar\omega_{t,k} + \sigma/K$ again, we have $ \omega_{u,k} \ge \sigma/K$ and thus
\begin{align*}
    \bar\omega_{v,k}
    \ge \frac{\sigma}{K}(1-\sigma)^{\abs{I}-1}
 \exp\del[2]{-\gamma\sum_{t\in\cI}\{\Lambda_{t,k}-\Lambda_t(q_t)\}}.
\end{align*}
Taking logarithms, rearranging, and using
$\log(1-\sigma) \ge -2\sigma$ for $\sigma \in [0, 1/2]$,
\begin{align*}
    \sum_{t\in\cI}\Lambda_t(q_t)
    \le \sum_{t\in\cI}\Lambda_{t,k}
    + \frac{1}{\gamma}\del[2]{\log\frac{K}{\sigma} + 2\abs{I}\sigma},
\end{align*}
which is the desired result.    
\end{proof}

\subsection{Proof of \cref{thm:regret_noshare}}

\begin{proof}
 Write $\Lambda_{t,k} := \Lambda_t(q_{t,k})$ for simplicity. When $\sigma=0$, the weight update becomes
    \begin{align*}
        \omega_{t+1,k} =\frac{ \omega_{1,k}\exp(-\gamma\sum_{s=1}^{t}\Lambda_{s,k})}{\sum_{h=1}^K \omega_{1,h}\exp(-\gamma\sum_{s=1}^{t}\Lambda_{s,h})}.
    \end{align*}
This, together with \eqref{eq:affine_q} in the proof of \cref{thm:regret_annealed}, implies that
    \begin{align*}
        \exp\del[1]{-\gamma\Lambda_t(q_t)}
        = \sum_{k=1}^K \omega_{t,k}\exp(-\gamma\Lambda_{t,k})
          = \frac{\sum_{k=1}^K \omega_{1,k}\exp(-\gamma\sum_{s=1}^{t}\Lambda_{s,k})}{\sum_{h=1}^K \omega_{1,h}\exp(-\gamma\sum_{s=1}^{t-1}\Lambda_{s,h})}.
    \end{align*}
Taking the logarithm to both sides and summing over $t\in[T]$, we have that for any $k,$
    \begin{align*}
        \sum_{t=1}^T\Lambda_t(q_t)
       &=-\frac{1}{\gamma}\log \del[3]{\sum_{k=1}^K \omega_{1,k}\exp\del[2]{-\gamma\sum_{s=1}^{t}\Lambda_{s,k}}}\\
       &\le- \frac{1}{\gamma}\log \del[3]{ \omega_{1,k}\exp\del[2]{-\gamma\sum_{s=1}^{t}\Lambda_{s,k}}}\\
       &=- \frac{1}{\gamma}\log(\omega_{1,k})+\sum_{s=1}^{t}\Lambda_{s,k},
    \end{align*}
which completes the proof.
   \end{proof}

\section{Proofs for \cref{sec:conformal}}

\subsection{Supporting lemmas}

\subsubsection{Expected pinball loss and its gradient}


\begin{lemma}
\label[lemma]{lem:pinball-grad}
Let $\psi\in\R$, $\tau>0$, $r \in \mathbb{R}$ and $\alpha\in(0,1)$. Then we have
    \begin{align*}
         \E_{\theta\sim \N(\psi, \tau^2)}[\ell_\alpha(\theta,r)]
          = (1 - \alpha)(r - \psi) + (\psi - r)\Phi\del[2]{\frac{\psi - r}{\tau}}
          + \tau\phi\del[2]{\frac{\psi - r}{\tau}},
    \end{align*}    
and
    \begin{align*}
         \frac{\partial}{\partial \psi}  \E_{\theta\sim \N(\psi, \tau^2)}[\ell_\alpha(\theta,r)]
         = \alpha - \Phi\del[2]{\frac{r - \psi}{\tau}}.
    \end{align*}
\end{lemma}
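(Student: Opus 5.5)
We compute the Gaussian expectation of the pinball loss by writing $\theta=\psi+\tau Z$ with $Z\sim\N(0,1)$. We then obtain the derivative either by differentiating the closed form directly or by differentiating under the integral sign.

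\textbf{Closed form.} Write the pinball loss as $\ell_\alpha(\theta,r)=(1-\alpha)(r-\theta)+(\theta-r)_+$. This identity holds because for $\theta\ge r$ the right side equals $(1-\alpha)(r-\theta)+(\theta-r)=\alpha(\theta-r)$, and for $\theta<r$ it equals $(1-\alpha)(r-\theta)$. Taking expectations, the linear part contributes $(1-\alpha)(r-\psi)$. The hinge part is the standard call-option formula. Set $\mu:=\psi-r$, so that $\theta-r\sim\N(\mu,\tau^2)$. Then
\begin{align*}
\E[(\mu+\tau Z)_+]=\mu\,\Phi(\mu/\tau)+\tau\,\phi(\mu/\tau).
\end{align*}
This follows by integrating over the region $\{Z>-\mu/\tau\}$, using $\int_{c}^\infty z\phi(z)\,\d z=\phi(c)$ and the symmetry of $\phi$. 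Substituting $\mu=\psi-r$ gives the first display of the lemma.

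\textbf{Derivative.} Differentiate the closed form in $\psi$:
\begin{itemize}
\item the linear term gives $-(1-\alpha)$;
\item the term $\mu\Phi(\mu/\tau)$ gives $\Phi(\mu/\tau)+(\mu/\tau)\phi(\mu/\tau)$;
\item the term $\tau\phi(\mu/\tau)$ gives $\phi'(\mu/\tau)=-(\mu/\tau)\phi(\mu/\tau)$.
\end{itemize}
The two $\phi$ terms cancel, leaving $\Phi((\psi-r)/\tau)-(1-\alpha)=\alpha-\bigl(1-\Phi((\psi-r)/\tau)\bigr)=\alpha-\Phi((r-\psi)/\tau)$.

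As a cross-check, one can instead differentiate under the expectation. This is justified because $\ell_\alpha$ is Lipschitz in $\theta$, so dominated convergence applies. The a.e. derivative of the loss is $\alpha\ind(\theta\ge r)-(1-\alpha)\ind(\theta<r)=\alpha-\ind(\theta<r)$, and $\E[\ind(\theta<r)]=\Phi((r-\psi)/\tau)$, which gives the same answer.

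There is no real obstacle here. The only care needed is the sign bookkeeping in the call-option integral, and this is exactly what the cancellation of the $\phi$ terms in the derivative confirms.
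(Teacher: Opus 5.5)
Your proof is correct and follows essentially the paper's route: obtain the Gaussian expectation in closed form, then differentiate it and let the $\phi$-terms cancel via $\phi'(z)=-z\phi(z)$. The differences are cosmetic. Writing $\ell_\alpha(\theta,r)=(1-\alpha)(r-\theta)+(\theta-r)_+$ lets you compute a single truncated Gaussian moment instead of the paper's two integrals over $\{\theta\ge r\}$ and $\{\theta<r\}$, and your standardization $\theta=\psi+\tau Z$ keeps explicit the $\tau^{-1}$ density normalization that the paper's integrals drop notationally. Your dominated-convergence cross-check, using the a.e.\ derivative $\alpha-\ind(\theta<r)$, gives the gradient without needing the closed form at all.
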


\begin{proof}
The expected loss splits at $\theta = r$:
\begin{align*}
      \E_{\theta\sim \N(\psi, \tau^2)}[\ell_\alpha(\theta,r)]
  &= \alpha \int_r^\infty (\theta - r)\phi((\theta-\psi)/\tau)\d\theta
    + (1 - \alpha) \int_{-\infty}^r (r -\theta)  \phi((\theta-\psi)/\tau)\d\theta.
\end{align*}
The first term is given by
    \begin{align*}
    \int_r^\infty (\theta - r) \phi((\theta-\psi)/\tau)\d\theta&=
        \int_r^\infty (\theta - \psi) \phi((\theta-\psi)/\tau)\d\theta +(\psi-r)\int_r^\infty\phi((\theta-\psi)/\tau)\d\theta\\
        &= -\tau \phi((\theta-\psi)/\tau)|_r^\infty+(\psi-r)\cbr[0]{1-\Phi((r-\psi)/\tau)}\\
        &=\tau\phi((r-\psi)/\tau)+(\psi-r)\Phi((\psi-r)/\tau).
    \end{align*}
Similarly, the second term is given by
    \begin{align*}
        \int_{-\infty}^r (r - \theta) \phi((\theta-\psi)/\tau)\d\theta
        &=(r-\psi) \int_{-\infty}^r \phi((\theta-\psi)/\tau)\d\theta-\int_{-\infty}^r ( \theta-\psi) \phi((\theta-\psi)/\tau)\d\theta\\
        &=(r-\psi)\Phi((r-\psi)/\tau)+\tau \phi((\theta-\psi)/\tau)|_{-\infty}^r\\
        &=(r-\psi)\cbr{1-\Phi((\psi-r)/\tau)}+\tau\phi((r-\psi)/\tau).
    \end{align*}
Summing the two pieces yields the stated expression. Differentiating this, since $\phi'(z)=-z\phi(z)$, we have
    \begin{align*}
         \frac{\partial}{\partial\psi}   \E_{\theta\sim q_{\psi}}[\ell_\alpha(\theta,r)]
          & = \alpha-1 +\Phi\del[2]{\frac{\psi-r}{\tau}} +\frac{\psi-r}{\tau}\phi\del[2]{\frac{\psi-r}{\tau}}-\frac{\psi-r}{\tau}\phi\del[2]{\frac{\psi-r}{\tau}}\\ &=\alpha - \Phi\del[2]{\frac{r-\psi}{\tau}},
    \end{align*}
which completes the proof
\end{proof}

\subsubsection{Boundedness of Bayes-ACI iterates}

\begin{lemma}
\label[lemma]{lem:mean_bound}
Assume that $r_t \in [0, R]$ for all time $t$.  Then Bayes-ACI with any initialization $\psi_{1,k}\in[-\eta_k\tau_k^2-\tau_k\Phi^{-1}(\alpha),  R+\eta_k\tau_k^2-\tau_k\Phi^{-1}(\alpha)]$ satisfies
    \begin{align}
    \label{eq:m_t_bound}
         \psi_{t,k}\in[-\eta_k\tau_k^2-\tau_k\Phi^{-1}(\alpha),  R+\eta_k\tau_k^2-\tau_k\Phi^{-1}(\alpha)]
    \end{align}
for any $t$.
\end{lemma}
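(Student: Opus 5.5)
We argue by induction on $t$, using the closed-form update of \cref{lem:cp_svb_one_step}. Write $A:=\eta_k\tau_k^2$, $c:=-\tau_k\Phi^{-1}(\alpha)$, and the interval $J:=[c-A,\,R+c+A]$. The update reads
\begin{align*}
\psi_{t+1,k}=\psi_{t,k}-A\bigl[\alpha-\Phi\bigl((r_t-\psi_{t,k})/\tau_k\bigr)\bigr].
\end{align*}
The bracket always lies in $(\alpha-1,\alpha)\subset(-1,1)$, so a single step moves the iterate by strictly less than $A$. The base case $\psi_{1,k}\in J$ is the assumption, so it remains to show that $\psi_{t,k}\in J$ implies $\psi_{t+1,k}\in J$.

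We split into three cases according to where $\psi_{t,k}$ sits relative to the inner interval $[c,R+c]$.

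\emph{Case 1: $\psi_{t,k}\in[c,R+c]$.} Since the step has magnitude less than $A$, we get $\psi_{t+1,k}\in(c-A,R+c+A)\subset J$.

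\emph{Case 2: $\psi_{t,k}>R+c$.} Because $r_t\le R$, we have $r_t-\psi_{t,k}<-c=\tau_k\Phi^{-1}(\alpha)$. By monotonicity of $\Phi$ this gives $\Phi((r_t-\psi_{t,k})/\tau_k)<\alpha$, so the bracket is positive and the iterate strictly decreases. The decrease is less than $A$, so $\psi_{t+1,k}>R+c-A\ge c-A$. Since also $\psi_{t+1,k}<\psi_{t,k}\le R+c+A$, we conclude $\psi_{t+1,k}\in J$.

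\emph{Case 3: $\psi_{t,k}<c$.} Because $r_t\ge0$, we have $r_t-\psi_{t,k}>-c$, hence $\Phi((r_t-\psi_{t,k})/\tau_k)>\alpha$. The iterate therefore increases, by less than $A$, which gives $c-A\le\psi_{t,k}<\psi_{t+1,k}<c+A\le R+c+A$.

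The only point needing care is the sign bookkeeping with $c=-\tau_k\Phi^{-1}(\alpha)$. In particular, the threshold at which the bracket changes sign is $\psi=r_t+c$, and this must be paired correctly with $r_t\in[0,R]$. Cases 2 and 3 also use $R\ge0$, which holds since $R$ bounds the scores. Apart from this, the argument is mechanical.
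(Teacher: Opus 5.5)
Your proof is correct and is essentially the paper's argument. The paper phrases it as a contradiction at the first time the bound fails, but it uses the same two ingredients as you. The first is the one-step bound $|\psi_{t+1,k}-\psi_{t,k}|\le\eta_k\tau_k^2$. The second is that, because $r_t\in[0,R]$, the update moves back toward $[-\tau_k\Phi^{-1}(\alpha),\,R-\tau_k\Phi^{-1}(\alpha)]$ whenever the iterate lies outside it. Your direct induction is an equally valid and slightly more explicit packaging of the same reasoning.
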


\begin{proof}
By \cref{lem:cp_svb_one_step}, 
    \begin{align}
    \label{eq:update_diff}
          \abs{ \psi_{t+1,k} - \psi_{t,k}}
    \le \eta_k \tau_k^2 \abs{\alpha - \Phi\del{\frac{r_t - \psi_{t,k}}{\tau_k}}}
    \le \eta_k\tau_k^2.
    \end{align}
We prove the lemma by contradiction. Let $t$ be the smallest time such that \eqref{eq:m_t_bound} is violated.  Suppose that $ \psi_{t,k}>R+\eta_k\tau_k^2-\tau_k\Phi^{-1}(\alpha)$, then by \eqref{eq:update_diff}, we must have  $\psi_{t-1,k}>R-\tau_k\Phi^{-1}(\alpha)$. Thus, by the assumption that $r_{t-1}<R$, we have 
    \begin{align*}
        \Phi\del[2]{\frac{r_{t-1} - \psi_{t-1,k}}{\tau_k}}
        < \Phi\del[2]{\frac{R -R+\tau_k\Phi^{-1}(\alpha)}{\tau_k}}\le \alpha
    \end{align*}
and thus,
        \begin{align*}
        \psi_{t,k}
        = \psi_{t-1,k}
    - \eta_k \tau_k^2 \sbr[3]{\alpha - \Phi\del[2]{\frac{r_{t-1} - \psi_{t-1,k}}{\tau_k}}}
    \le \psi_{t-1,k}
    \le R+\eta_k\tau_k^2-\tau_k\Phi^{-1}(\alpha),
    \end{align*}
where the last inequality follows from that we define $t$ as the first time that the bound does not hold. This contradicts with our assumption that $ \psi_{t,k}>R+\eta_k\tau_k^2-\tau_k\Phi^{-1}(\alpha)$. 

We then suppose that $ \psi_{t,k}<-\eta_k\tau_k^2-\tau_k\Phi^{-1}(\alpha)$. By \eqref{eq:update_diff}, we must have  $\psi_{t-1,k}<-\tau_k\Phi^{-1}(\alpha)$. Thus, by the assumption that $r_{t-1}\ge0$. we have
        \begin{align*}
        \Phi\del[2]{\frac{r_{t-1} - \psi_{t-1,k}}{\tau_k}}\
        >  \Phi\del[2]{\frac{\tau_k\Phi^{-1}(\alpha)}{\tau_k}}\ge \alpha
    \end{align*}
 and thus,
        \begin{align*}
        \psi_{t,k}
        = \psi_{t-1,k}
        - \eta_k \tau_k^2 \sbr[3]{\alpha - \Phi\del[2]{\frac{r_{t-1} - \psi_{t-1,k}}{\tau_k}}}
        \ge \psi_{t-1,k}
        \ge -\eta_k\tau_k^2-\tau_k\Phi^{-1}(\alpha)
    \end{align*}
where the last inequality follows from that we define $t$ as the first time that the bound does not hold. This contradicts with our assumption that $ \psi_{t,k}<-\eta_k\tau_k^2-\tau_k\Phi^{-1}(\alpha)$.
\end{proof}

\subsubsection{Smoothing error for pinball loss}

\begin{lemma}
\label[lemma]{lem:pinball_smooth_error}
Let $\psi\in\R$, $\tau>0$, $r \in \mathbb{R}$ and $\alpha\in(0,1)$. Then we have
    \begin{align}
    \label{eq:pinball_smooth_error}
         0\le\E_{\theta\sim \N(\psi, \tau^2)}[\ell_\alpha(\theta,r)]-\ell_\alpha(\psi,r)\le \frac{\tau}{\sqrt{2\pi}}.
    \end{align}   
\end{lemma}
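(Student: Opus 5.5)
The plan is to obtain the lower bound from convexity and the upper bound from the closed form in \cref{lem:pinball-grad}. The lower bound is Jensen's inequality: $\theta\mapsto\ell_\alpha(\theta,r)$ is convex, being a maximum of two affine functions, and $\E_{\theta\sim\N(\psi,\tau^2)}[\theta]=\psi$. Hence $\E[\ell_\alpha(\theta,r)]\ge \ell_\alpha(\psi,r)$.

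For the upper bound I will set $u:=\psi-r$ and rewrite both sides in terms of $u$. By \cref{lem:pinball-grad}, the smoothed loss equals
\begin{align*}
-(1-\alpha)u + u\Phi(u/\tau) + \tau\phi(u/\tau).
\end{align*}
The pinball loss itself can be written as
\begin{align*}
\ell_\alpha(\psi,r)=-(1-\alpha)u + (u)_+ ,
\end{align*}
which follows from $\alpha(u)_+ + (1-\alpha)(-u)_+ = (u)_+ - (1-\alpha)u$. The $\alpha$-dependent terms cancel, so the gap is
\begin{align*}
g(u):=u\Phi(u/\tau) - (u)_+ + \tau\phi(u/\tau).
\end{align*}
This gap is independent of $\alpha$. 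It equals $\tau\,h(u/\tau)$, where $h(z)=z\Phi(z)-(z)_+ +\phi(z)$ is the Gaussian smoothing gap of the ReLU.

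The remaining task is to show $h(z)\le\phi(0)=1/\sqrt{2\pi}$. Since $h(-z)=-z(1-\Phi(z))+\phi(z)=h(z)$ for $z\ge0$, the function $h$ is even, so it suffices to take $z\ge0$. There $h(z)=\phi(z)-z(1-\Phi(z))$. Its derivative is
\begin{align*}
h'(z)=-z\phi(z)-(1-\Phi(z))+z\phi(z)=-(1-\Phi(z))\le 0,
\end{align*}
so $h$ is nonincreasing on $[0,\infty)$ and $h(z)\le h(0)=1/\sqrt{2\pi}$. Multiplying back by $\tau$ gives the claimed bound $\tau/\sqrt{2\pi}$.

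The main obstacle is only the bookkeeping that reduces the gap to the $\alpha$-free function $h$; after that the argument is a one-line monotonicity check. As a cross-check, $h\ge0$ also follows from $h(z)\to0$ as $z\to\infty$ together with monotonicity, which gives the lower bound again without Jensen.
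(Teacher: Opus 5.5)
Your proposal is correct and follows essentially the same route as the paper: Jensen's inequality for the lower bound, then the closed form from \cref{lem:pinball-grad} to reduce the gap to $\tau\{z(\Phi(z)-\ind(z>0))+\phi(z)\}$ with $z=(\psi-r)/\tau$, which is maximized at $z=0$. The only cosmetic difference is that you use evenness of $h$ so you only check monotonicity on $z\ge 0$, whereas the paper computes the derivative separately on $z>0$ and $z<0$.
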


\begin{proof}
Since the pinball loss is convex, the lower bound in \eqref{eq:pinball_smooth_error} follows from Jensen's inequality. We now derive the upper bound.  Since $\ell_\alpha(\psi,r)=(1-\alpha)(r-\psi)+(\psi-r)\ind(\psi>r)$, by \cref{lem:pinball-grad}, we have
    \begin{align}
    \label{eq:pinball_jensen}
         \E_{\theta\sim \N(\psi, \tau^2)}[\ell_\alpha(\theta,r)]-\ell_\alpha(\psi,r)
          = (\psi - r)\cbr{\Phi\del[2]{\frac{\psi - r}{\tau}}-\ind(\psi>r)}     + \tau\phi\del[2]{\frac{\psi - r}{\tau}}.
    \end{align}  
Define a function 
    \begin{align*}
        g(z)=\tau z\cbr{\Phi(z)-\ind(z>0)}     + \tau\phi(z)
    \end{align*}
so that the right-hand side of \cref{eq:pinball_jensen} is written as $g((\psi-r)/\tau)$.
Then for $z>0$,
    \begin{align*}
        g'(z) &=\tau(\Phi(z)-1)-\tau z\phi(z) -\tau\phi'(z)=\tau(\Phi(z)-1)<0
    \end{align*}
while for $z<0$, similarly,
    \begin{align*}
        g'(z) &=\tau\Phi(z)-\tau z\phi(z) -\tau\phi'(z)=\tau\Phi(z)>0.
    \end{align*}
These imply that $g$ is maximized at $z=0$. Hence, the right-hand side of \cref{eq:pinball_jensen} is bounded by $g(0)=\tau\phi(0)=\tau/\sqrt{2\pi}$.
\end{proof}

\subsection{Proof of \cref{lem:cp_svb_one_step}}

\begin{proof}
With $\tau_k^2$ fixed, the KL between two Gaussians of equal variance reduces to
$\mathrm{KL}(q_m, q_{m'}) = (m - m')^2 / (2 \tau_k^2)$.
The optimization problem becomes quadratic in $\psi$ with solution
    \begin{align*}
          \psi_{t+1,k} =  \psi_{t,k} - \eta_k \tau_k^2  \frac{\d}{\d\psi}\bar{L}_t( \psi_{t,k}),
    \end{align*}
and \cref{lem:pinball-grad} gives the stated form.
\end{proof}

\subsection{Proof of \cref{thm:long_run_single}}

\begin{proof}
Define the per-expert coverage deviation
    \begin{align}
    \label{eq:per-expert_dev}
          D_{t,k} :=   \mathbb{E}_{\theta \sim q_{t,k}}\bigl[\ind(r_t > \theta)\bigr]-\alpha
          =\Phi\del[2]{\frac{r_t -  \psi_{t,k}}{\tau_k}} - \alpha.
    \end{align}
Then by \cref{lem:cp_svb_one_step},
\[
  \psi_{t+1,k} -  \psi_{t,k} = \eta_k \tau_k^2 D_{t,k}.
\]
Summing over $t = 1, \ldots, T$, we have
\[
  \eta_k \tau_k^2 \sum_{t=1}^T D_{t,k}
  = \psi_{t+1,k} - \psi_{1,k}.
\]
\cref{lem:mean_bound} gives $|\psi_{t+1,k} - \psi_{1,k}| \leq R+2\eta_k\tau_k^2$, so
\begin{equation}
  \left| \sum_{t=1}^T D_{t,k} \right|
  \leq \frac{R}{\eta_k \tau_k^2}+2,
  \label{eq:per-expert-bound}
\end{equation}
which completes the proof.
\end{proof}

\subsection{Proof of \cref{thm:long_run_agg}}

\begin{proof}
The proof follows the strategy of Theorem~3.2 of \cite{gibbs2024conformal} with modifications need to handle our setting. Let the aggregated coverage deviation be denoted by
    \begin{align*}
         D_t := \mathbb{E}_{\theta \sim q_t}[\ind(r_t > \theta)] - \alpha
  = \sum_{k=1}^K \omega_{t,k} D_{t,k}.
    \end{align*}
where $D_{t,k}$ is defined in \eqref{eq:per-expert_dev}. Define
    \begin{align*}
        \tilde \psi_t := \sum_{k=1}^K \frac{\omega_{t,k} (\psi_{t,k} + \tau_k \Phi^{-1}(\alpha))}{A_k}.
    \end{align*}
Observe that
\begin{align*}
\tilde \psi_{t} 
    &= \sum_{k=1}^K  \frac{\omega_{t,k} ( (\psi_{t+1,k} -A_k D_{t,k}) + \tau_k \Phi^{-1}(\alpha) )}{A_k} \\
    &= \sum_{k=1}^K \frac{\omega_{t,k} (\psi_{t+1,k} + \tau_k\Phi^{-1}(\alpha) )}{A_k} -  \sum_{k=1}^K \omega_{t,k}D_{t,k} \\
    &= \tilde{\psi}_{t+1} + \sum_{k=1}^K  \frac{(\omega_{t,k}- \omega_{t+1,k}) (\psi_{t+1,k} + \tau_k \Phi^{-1}(\alpha))}{A_k} - \sum_{k=1}^K \omega_{t,k} D_{t,k}
\end{align*}
Thus,
\begin{align}
\label{eq:err_agg}
    D_t = \sum_{k=1}^K \omega_{t,k} D_{t,k}
    =  \tilde{\psi}_{t+1} - \tilde{\psi}_{t} + \sum_{k=1}^K \frac{(\omega_{t,k}- \omega_{t+1,k})(\psi_{t+1,k} + \tau_k \Phi^{-1}(\alpha))}{A_k} 
\end{align}
Recall that by definition,
\begin{align*}
    \omega_{t+1,k} = (1-\sigma_t) \bar{\omega}_{t+1,k} + \sigma_t /K
\end{align*}
By a direct computation,
\begin{align*}
\bar\omega_{t,k} - \omega_{t,k}
&= 
\frac{\omega_{t,k}\exp(-\gamma_t  \bar{L}_{t,k}(\psi_{t,k}))}
{\sum_{h=1}^K \omega_{t,h} \exp(-\gamma_t\bar{L}_{t,h}(\psi_{t,h}))} - \omega_{t,k} \\
&= \omega_{t,k}
\frac{\exp(-\gamma_t  \bar{L}_{t,k}(\psi_{t,k}))-\sum_{h=1}^K \omega_{t,h} \exp(-\gamma_t\bar{L}_{t,h}(\psi_{t,h}))}
{\sum_{h=1}^K \omega_{t,h} \exp(-\gamma_t\bar{L}_{t,h}(\psi_{t,h}))} \\
&= \omega_{t,k}
\frac{\sum_{h=1}^K \omega_{t,h}(\exp(-\gamma_t  \bar{L}_{t,k}(\psi_{t,k}))- \exp(-\gamma_t\bar{L}_{t,h}(\psi_{t,h})))}
{\sum_{h=1}^K \omega_{t,h} \exp(-\gamma_t\bar{L}_{t,h}(\psi_{t,h}))} \\
&= \omega_{t,k} \frac{\sum_{h=1}^K \omega_{t,h} \exp(-\gamma_t\bar{L}_{t,h}(\psi_{t,h}))\{\exp(\gamma_t\bar{L}_{t,h}(\psi_{t,h})-\gamma_t  \bar{L}_{t,k}(\psi_{t,k})) - 1\}}{\sum_{h=1}^K \omega_{t,h} \exp(-\gamma_t\bar{L}_{t,h}(\psi_{t,h}))} \\
&=\omega_{t,k} \sum_{h=1}^K \bar{\omega}_{t+1,h}(\exp(\gamma_t\bar{L}_{t,h}(\psi_{t,h})-\gamma_t  \bar{L}_{t,k}(\psi_{t,k})) - 1)
\end{align*}
By the triangular inequality,
    \begin{align*}
        &|\bar{L}_{t,h}(\psi_{t,h}) -  \bar{L}_{t,k}(\psi_{t,k})|\\
        &\le |\bar{L}_{t,h}(\psi_{t,h})-\ell_\alpha(\psi_{t,h},r_t)|
        +|\bar{L}_{t,k}(\psi_{t,k})-\ell_\alpha(\psi_{t,k},r_t)|
        +|\ell_\alpha(\psi_{t,h},r_t)-\ell_\alpha(\psi_{t,k},r_t)|.
    \end{align*}
The first two terms bounded above by $\tau_{\max}/\sqrt{2\pi}\le \tau_{\max}/2$ by \cref{lem:pinball_smooth_error}. For the last term, since we know that $\psi_{t,k} \in [-\eta_k \tau_k^2 - \tau_k\Phi^{-1}(\alpha), R +\eta_k\tau_k^2 - \tau_k \Phi^{-1}(\alpha)]$ by \cref{lem:mean_bound}, we have
    \begin{align*}
        |\ell_\alpha(\psi_{t,h},r_t)-\ell_\alpha(\psi_{t,k},r_t)|
       & \le \max \{ \alpha , 1-\alpha \}|\psi_{t,h} - \psi_{t,k}|\\
        &\le R + 2 A_{\max} + (\tau_{\max} - \tau_{\min}) |\Phi^{-1}(\alpha)|.
    \end{align*}
Thus,
$$
|\bar L_{t,h}(\psi_{t,h}) - \bar L_{t,k}(\psi_{t,k})| \le  \tau_{\max} + R + 2 A_{\max} + (\tau_{\max} - \tau_{\min}) |\Phi^{-1}(\alpha)| = C_{\alpha}.
$$
Hence, by the mean value theorem,
$$
|\exp(\gamma_t\bar{L}_{t,h}(\psi_{t,h})-\gamma_t  \bar{L}_{t,k}(\psi_{t,k})) - 1| \le \gamma_t C_{\alpha} \exp( \gamma_t C_{\alpha})
$$
and thus also,
$$
|\bar\omega_{t,k}- \omega_{t,k}| \le \omega_{t,k}\gamma_t C_{\alpha} \exp( \gamma_t C_{\alpha}).
$$
Applying \cref{lem:mean_bound} again with the triangular inequality, we conclude that
\begin{align*}
    &\left|\sum_{k=1}^K \frac{(\omega_{t+1,k}- \omega_{t,k}) (\psi_{t+1,k} + \tau_k \Phi^{-1}(\alpha))}{A_k}\right| \\
    &\le (1- \sigma_t)\sum_{k=1}^K \left| \frac{(\bar\omega_{t,k} - \omega_{t,k})(\psi_{t+1,k} + \tau_k\Phi^{-1}(\alpha))}{A_k} \right| + \sigma_t\sum_{k=1}^K \left| \frac{(1/K - \omega_{t,k})(\psi_{t+1,k} + \tau_k \Phi^{-1}(\alpha))}{A_k} \right| \\
    &\le \frac{R + A_{\max}}{A_{\min}}\gamma_t C_{\alpha}\exp( \gamma_t C_{\alpha}) + 2 \sigma_t \frac{R+ A_{\max}}{A_{\min}}
\end{align*}
where we used the fact that 
$$
\left| \frac{\psi_{t,k}+ \tau_k \Phi^{-1}(\alpha)}{A_k} \right| \le \frac{R+ A_{\max}}{A_{\min}}.
$$
Taking a sum over $t$ in \cref{eq:err_agg} gives us,
\begin{align*}
&\left|
\frac{1}{T} \sum_{t=1}^{T}
\mathbb{E}_{\theta \sim q_t}\bigl[\ind(r_t > \theta)\bigr] - \alpha
\right| \\
&\le 
\frac{|\tilde \psi_{1} - \tilde \psi_{T+1}|}{T} + \frac{(R + A_{\max})C_{\alpha}}{A_{\min}}\frac{1}{T} \sum_{t=1}^T \gamma_t \exp({\gamma_t C_{\alpha}}) + 2\frac{R+ A_{\max}}{A_{\min}} \frac{1}{T} \sum_{t=1}^T \sigma_t 
\end{align*}
Again, by \cref{lem:mean_bound}, we can observe that $\tilde \psi_t \in [-1,  1 + R/A_{\min}]$ and thus $|\tilde \psi_1 - \tilde\psi_{T+1}| \le 2 + R / A_{\min}$. Plugging this into the previous expression yields the desired upper bound. 
\end{proof}

\section{Proofs for \cref{sec:regression}}

\subsection{Proof of \cref{lem:log-loss-oracle}}

\begin{proof}
Applying \cref{thm:regret_noshare} with $\ell_t(f) = -\log \phi_\varsigma(Y_t-f(X_t))$, we have, for every $k\in[K]$,
\begin{align*}
	-\sum_{t=1}^T
	\log \p_t(Y_t|X_t,\cD_{t-1})
  	&\le	-\sum_{t=1}^T\log \del[1]{ \p_{t,k}(Y_t|X_t,\cD_{t-1})}
	+	\log(1/\omega_{1,k}).
\end{align*}
Adding $\sum_{t=1}^T\log \p_\star(Y_t|X_t)$ to both sides and taking
expectation gives
\begin{align*}
	&\sum_{t=1}^T	\E_{\cD_t}\sbr{	\log\frac{	\p_\star(Y_t|X_t)	}{	\p_t(Y_t|X_t,\cD_{t-1})	}	} \\
	&\qquad\le
	\sum_{t=1}^T
	\E_{\cD_t}\sbr{	\log\frac{\p_\star(Y_t|X_t)}{	\p_{t,k}(Y_t|X_t,\cD_{t-1})	}	}
	+		\log(1/\omega_{1,k}).
\end{align*}
Since $\p_t$ and $\p_{t,k}$ are $\cD_{t-1}$-measurable conditional densities and $(X_t,Y_t)$ is independent of $\cD_{t-1}$, the left- and right-hand sides are the corresponding expected conditional KL divergences. Taking the minimum over $k$ completes the proof.
\end{proof}

\subsection{Proof of \cref{thm:gp_single}}

\begin{proof}
\noindent
\textbf{Step 1: Regret bound} The marginal likelihood under the prior $\pi_{a_k}$ is
    \begin{align*}
        m_k(\cD_T)  =  \int \prod_{t=1}^T \phi_\varsigma   \bigl(Y_t - f(X_t)\bigr)  \d\pi_{a_k}(f).
    \end{align*}
By the definition of conditional density and Bayes' rule applied recursively,
    \begin{equation}\label{eq:chain}
    \sum_{t=1}^T \log \p_{t,k}(Y_t | X_t, \cD_{t-1})  =  \log m_k(\cD_T).
    \end{equation}
Since $\p_{t,k}(\cdot|X_t,\cD_{t-1})$ is $(X_t,\cD_{t-1})$-measurable, by taking expectations and conditioning on $(X_t,\cD_{t-1})$ in each summand, we have
\begin{equation}\label{eq:kl_step}
    \E_{\cD_{t}}\sbr{\log\frac{\p_\star(Y_t| X_t)}{\p_{t,k}(Y_t| X_t, \cD_{t-1})}}
    =\E_{\cD_{t}}\sbr{\kl(\p_\star(\cdot| X_t) , \p_{t,k}(\cdot| X_t, \cD_{t-1}))}.
\end{equation}
By \eqref{eq:chain} and \eqref{eq:kl_step},
    \begin{align}
     \sum_{t=1}^T &\E_{\cD_{t}}\sbr{\kl(\p_\star(\cdot| X_t), \p_{t,k}(\cdot| X_t, \cD_{t-1}))}\notag\\
   & = \E_{\cD_{t}}\left[-\log m_k(\cD_T)\right] 
    + \sum_{t=1}^T\E_{\cD_t}\left[\log \p_\star(Y_t| X_t)\right].\label{eq:kl-sum}
    \end{align}
Since $Y_t | X_t \sim \N(f_\star(X_t), \varsigma^2)$, the second term in \eqref{eq:kl-sum} has a closed form given by
    \begin{align*}
             \E_{\cD_t}\left[\log p^\star(Y_t | X_t)\right] = -\frac{1}{2}\log(2\pi\varsigma^2) - \frac{1}{2}.
    \end{align*}
For the first term, the Donsker-Varadhan variational formula gives, for any probability measure $q$ absolutely continuous with respect to $\pi_{a_k}$,
    \begin{align*}
        \log m_k(\cD_T) &= \log \int\exp\del[2]{\sum_{t=1}^T \log\phi_\varsigma (Y_t - f(X_t))} \d\pi_{a_k}(f) \\
        &\ge \E_{f\sim q}\left[\sum_{t=1}^T \log\phi_\varsigma \bigl(Y_t - f(X_t)\bigr)\right] 
        - \kl(q , \pi_{a_k}).
    \end{align*}
Since $\E_{Y,X}[\log\phi_\varsigma (Y - f(X))] = -\frac{1}{2}\log(2\pi\varsigma^2) - \frac{1}{2} - \frac{1}{2\varsigma^2}\norm{f - f_\star}_{\cL_2(P_X)}^2$, we have
\begin{equation}\label{eq:dv-applied}
     \E_{\cD_t}[-\log m_k(\cD_T)] 
    \le T\left[\frac{1}{2}\log(2\pi\varsigma^2) + \frac{1}{2}\right] 
    + \frac{T}{2\varsigma^2}\E_q\sbr{\norm{f - f_\star}_{\cL_2(P_X)}^2} + \kl(q ,\pi_{a_k})
\end{equation}
Substituting \eqref{eq:dv-applied} into \eqref{eq:kl-sum}, the constants $T[\frac{1}{2}\log(2\pi\varsigma^2) + \frac{1}{2}]$ are canceled out, leaving
    \begin{align}
      \sum_{t=1}^T \E_{\cD_{t}}\sbr{\kl(\p_\star(\cdot| X_t), p_{t,k}(\cdot| X_t, \cD_{t-1})}
      \le \frac{T}{2\varsigma^2}  \E_{f\sim q}\sbr{\norm{f - f_\star}_{\cL_2(P_X)}^2} + \kl(q , \pi_{a_k}).  \label{eq:after-cancel}
    \end{align}
    
\noindent
\textbf{Step 2: GP prior concentration analysis}
It remains to choose $q$ so that both terms on the right-hand side of \eqref{eq:after-cancel} are  bounded suitably.

\begin{lemma}[Prior concentration]\label{lem:vdv}
Assume the true regression function satisfies $f_\star \in \cC^\beta( [0,1]^d)$ with $\norm{f_\star}_\infty \le B$. Let $\pi_a$ denote the law of the rescaled Gaussian process $W^a$ with squared-exponential kernel of inverse bandwidth $a$.  Then there exists a constant $C>0$ such that,  for any $\varepsilon > Ca^{-\beta}$, the prior mass of the $\cL_2(P_X)$-ball of radius $\varepsilon$ around $f_\star$ satisfies
\begin{equation}
    -\log \pi_a\del[1]{ \norm{f - f_\star}_{\cL_2(P_X)} \le \varepsilon}
    \lesssim a^d \log^{1+d}(a/\varepsilon).
\end{equation}
In particular, setting
\begin{equation}
\varepsilon_T(a): =C\cbr[1]{ T^{-1} a^d +   a^{-2\beta}}^{1/2}
\end{equation}
yields
\begin{equation}\label{eq:prior-mass}
     -\log \pi_a\del[1]{ \norm{f - f_\star}_{\cL_2(P_X)} \le \varepsilon}
    \le \varepsilon_T(a)\}\bigr) \lesssim T \varepsilon_T(a)^2 \log^{1+d}(T).
\end{equation}

\end{lemma}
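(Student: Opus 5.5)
The plan is to follow the standard concentration-function argument of van der Vaart and van Zanten for rescaled squared-exponential processes. For a centered Gaussian random element $W^a$ in $(\cC(\cX),\|\cdot\|_\infty)$ with RKHS $\mathbb{H}^a$, define the concentration function
\begin{align*}
\varphi^a_{f_\star}(\varepsilon) := \inf_{h\in\mathbb{H}^a:\ \|h-f_\star\|_\infty\le\varepsilon}\tfrac12\|h\|_{\mathbb{H}^a}^2 - \log \Pr\bigl(\|W^a\|_\infty<\varepsilon\bigr).
\end{align*}
The general Gaussian-measure result (Kuelbs--Li--Linde; Lemma 5.3 of van der Vaart and van Zanten) gives
\begin{align*}
-\log\Pr\bigl(\|W^a-f_\star\|_\infty\le 2\varepsilon\bigr)\le \varphi^a_{f_\star}(\varepsilon).
\end{align*}
Since $\|f\|_{\cL_2(P_X)}\le\|f\|_\infty$ for a probability measure $P_X$, the sup-norm ball of radius $\varepsilon$ is contained in the $\cL_2(P_X)$-ball of radius $\varepsilon$. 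It therefore suffices to bound the two terms of $\varphi^a_{f_\star}(\varepsilon/2)$ separately.

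\textbf{Small-ball term.} I will invoke the entropy bound for the unit ball of $\mathbb{H}^a$ restricted to $[0,1]^d$, namely $\log N(\varepsilon,\mathbb{H}^a_1,\|\cdot\|_\infty)\lesssim a^d\log^{1+d}(a/\varepsilon)$. This bound follows from analytic extension of RKHS elements and a covering by local Taylor polynomials. Combined with the Kuelbs--Li link between entropy and small-ball probabilities, it gives
\begin{align*}
-\log\Pr\bigl(\|W^a\|_\infty<\varepsilon\bigr)\lesssim a^d\log^{1+d}(a/\varepsilon)
\end{align*}
for $a\ge a_0$ and $\varepsilon$ small; this is Lemma 4.6 of that paper, and I would cite it.

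\textbf{Approximation term.} First extend $f_\star$ to a compactly supported $\cC^\beta(\R^d)$ function with comparable norm. Then take $h=\psi_a * f_\star$, where $\psi_a(x)=a^d\psi(ax)$ and $\psi$ is a higher-order kernel satisfying $\int\psi=1$, $\int x^j\psi(x)\,\d x=0$ for $1\le|j|<\beta$, and $\hat\psi$ decaying fast enough that $|\hat\psi|^2/\hat\mu_a$ is integrable. Here $\hat\mu_a$ is the Gaussian spectral density of $c_a$. A Taylor expansion gives $\|h-f_\star\|_\infty\le C a^{-\beta}$. Writing $h$ in spectral form, Plancherel gives $\|h\|^2_{\mathbb{H}^a}\lesssim a^d\|f_\star\|^2_{\cL_2}\lesssim a^d$ (Lemma 4.3 of van der Vaart and van Zanten). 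Hence for $\varepsilon>2Ca^{-\beta}$ the infimum over $h$ is $\lesssim a^d$, which is dominated by the small-ball term. This proves the first display after adjusting $C$.

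\textbf{The ``in particular'' part.} Set $\varepsilon=\varepsilon_T(a)$. Then $\varepsilon_T(a)\ge Ca^{-\beta}$, so the first part applies. Also $a^d\le T\varepsilon_T(a)^2/C^2$, since $a^d = T\cdot T^{-1}a^d$. For the logarithm, $\varepsilon_T(a)\ge C T^{-1/2}a^{d/2}$ gives $a/\varepsilon\le T^{1/2}a^{1-d/2}/C$. For grid values $1\le a\le 2T$ this yields $\log(a/\varepsilon)\lesssim\log T$, and for $d\ge2$ it is even smaller. Multiplying the two bounds gives $-\log\pi_a(\cdot)\lesssim T\varepsilon_T(a)^2\log^{1+d}T$.

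\textbf{Main obstacle.} I expect the approximation step to be the hardest: building a kernel $\psi$ that is simultaneously of order greater than $\beta$ and has Fourier transform dominated by the Gaussian spectral density. The first thing to try is choosing $\hat\psi$ compactly supported and equal to $1$ near $0$, which makes all moment conditions automatic. The small-ball entropy bound is technical but can be cited directly.
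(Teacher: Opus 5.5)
Your proposal is correct and follows essentially the same route as the paper. The paper gives no argument of its own and simply defers to Section 5.1 of van der Vaart and van Zanten (2009), which is exactly what you reconstruct: the concentration-function bound, the sup-norm small-ball estimate $a^d\log^{1+d}(a/\varepsilon)$, and RKHS approximation of $f_\star$ by convolution with a kernel whose Fourier transform is compactly supported and equal to one near zero.

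Your handling of the ``in particular'' part makes explicit what the paper leaves implicit, namely the restriction to grid values $a\le 2T$. One small addition: for large grid values where $\varepsilon_T(a)$ is not small, you can fall back on monotonicity of the small-ball probability in $\varepsilon$.
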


For the proof of the above lemma, see  Section 5.1 of \cite{van2009adaptive}.

Now take $q^*$ to be the prior $\pi_{a_k}$ restricted (and renormalized) to the ball $B(f_\star, \varepsilon_T(a)) = \{f : \norm{f - f_\star}_{\cL_2(P_X)} \le \varepsilon_T(a)\}$:
    \begin{align*}
          q^*(\d f) = \frac{\ind\{f \in \cB_k\}}{\pi_{a_k}(\cB_k)}\pi_{a_k}(\d f)
          \text{ with }\cB_k:=\cbr{f: \norm{f - f_\star}_{\cL_2(P_X)} \le \varepsilon_T(a_k)}
    \end{align*}
Then by construction,
    \begin{align*}
        \E_{f\sim q^*}[\norm{f - f_\star}_{\cL_2(P_X)}^2] &\le\varepsilon_T(a_k)^2
    \end{align*}
and
    \begin{align*}
    \kl(q^* , \pi_{a_k}) &= -\log\pi_{a_k}(\cB_k) \lesssim T \varepsilon_T(a_k)^2.
    \end{align*}
These two observations complete the proof.
\end{proof}

\bibliography{_references}

\end{document}